\definecolor{citecol}{HTML}{6F130C}
\definecolor{tableofcontent}{HTML}{1F4A83}
\definecolor{urlcol}{HTML}{2470D8}
\newcommand{\Fcoem}[2][\ell]{ 
    \widehat{#2}_{#1}
}
\newcommand{\uG}{p}
\newcommand{\cfra}[1][l,\uG]{ 
    \boldsymbol{\varphi}_{#1}
    }
\newcommand{\cfrb}[2][l,\uG]{ 
    \boldsymbol{\psi}_{#1}^{#2}
}
\newcommand{\scala}{ 
    \alpha
}
\newcommand{\scalb}{ 
    \beta
}
\newcommand{\eigvm}[1][\ell]{ 
    \lambda_{#1}
}
\newcommand{\eigfm}[1][\ell]{ 
    \boldsymbol{u}_{#1}
}
\newcommand{\NV}{ 
    N
}
\newcommand{\ipG}[1]{\left\langle #1 \right\rangle
}
\newcommand{\mask}{ 
    h
}
\newcommand{\maska}{ 
    a
}
\newcommand{\maskb}[1][r]{ 
    b^{(#1)}
}
\newcommand{\FT}[1]{ 
    \widehat{#1}
}
\newcommand{\FS}[1]{ 
    \widehat{#1}
}
\newcommand{\ufrsys}[1][J_0]{ 
    \mathsf{UFS}_{#1}
}
\def\eqref#1{(\ref{#1})}
\def\1{\bm{1}}
\def\vf{{\bm{f}}}
\def\vg{{\bm{g}}}
\def\vu{{\bm{u}}}
\def\mA{{\bm{A}}}
\def\mD{{\bm{D}}}
\def\mH{{\bm{H}}}
\def\mI{{\bm{I}}}
\def\mP{{\bm{P}}}
\def\mU{{\bm{U}}}
\def\mW{{\bm{W}}}
\def\mX{{\bm{X}}}
\def\mY{{\bm{Y}}}
\def\mZ{{\bm{Z}}}
\DeclareMathAlphabet{\mathsfit}{\encodingdefault}{\sfdefault}{m}{sl}
\SetMathAlphabet{\mathsfit}{bold}{\encodingdefault}{\sfdefault}{bx}{n}
\def\gE{{\mathcal{E}}}
\def\gG{{\mathcal{G}}}
\def\gH{{\mathcal{H}}}
\def\gL{{\mathcal{L}}}
\def\gN{{\mathcal{N}}}
\def\gT{{\mathcal{T}}}
\def\gV{{\mathcal{V}}}
\def\gW{{\mathcal{W}}}
\def\sC{{\mathbb{C}}}
\def\sZ{{\mathbb{Z}}}
\newcommand{\tr}{\mathrm{tr}}
\newcommand{\R}{\mathbb{R}}
\newcommand{\conj}[1]{ 
 \overline{#1}
}
\newcounter{bxincomm}
\definecolor{aqua}{rgb}{0.00,0.67,0.80}
\newcounter{ygcounter}
\newcommand{\ygc}[1]{\ygc{\stepcounter{ygcounter}{\bf [YG's comment \arabic{ygcounter}: #1]}\;}}
\begin{document}

\title{Framelet Message Passing}

\author{
\name Xinliang Liu \email xinliang.liu@kaust.edu.sa \\
\addr King Abdullah University of Science and Technology\\
Thuwal 23955, Saudi Arabia
\AND
\name Bingxin Zhou \email bingxin.zhou@sjtu.edu.cn \\
\addr Institute of Natural Sciences and Shanghai National Center for Applied Mathematics (SJTU Center)\\ Shanghai Jiao Tong University, Shanghai 200240, China.
\AND
\name Chutian Zhang \email scarborough@sjtu.edu.cn \\
\addr Institute of Natural Sciences and School of Mathematical Sciences\\ Shanghai Jiao Tong University, Shanghai 200240, China.
\AND
\name Yu Guang Wang \email yuguang.wang@sjtu.edu.cn \\
\addr Institute of Natural Sciences and School of Mathematical Sciences\\ Shanghai Jiao Tong University, Shanghai 200240, China.
}

\editor{NA}

\maketitle

\begin{abstract}
Graph neural networks (GNNs) have achieved champion in wide applications. Neural message passing is a typical key module for feature propagation by aggregating neighboring features. In this work, we propose a new message passing based on multiscale framelet transforms, called Framelet Message Passing. Different from traditional spatial methods, it integrates framelet representation of neighbor nodes from multiple hops away in node message update. We also propose a continuous message passing using neural ODE solvers. It turns both discrete and continuous cases can provably achieve network stability and limit oversmoothing due to the multiscale property of framelets. Numerical experiments on real graph datasets show that the continuous version of the framelet message passing significantly outperforms existing methods when learning heterogeneous graphs and achieves state-of-the-art performance on classic node classification tasks with low computational costs.
\end{abstract}

\begin{keywords}
  graph neural networks, neural message passing, framelet transforms, oversmoothing, stability,  spectral graph neural network 
\end{keywords}

\tableofcontents

\section{Introduction}
Graph neural networks (GNNs) have received growing attention in the past few years \citep{bronstein2017geometric,hamilton2020graph,wu2020comprehensive}. The key to successful GNNs is the equipment of effective graph convolutions that distill useful features and structural information of given graph signals. Existing designs on graph convolutions usually summarize a node's local properties from its spatially-connected neighbors. Such a scheme is called message passing \citep{gilmer2017neural}, where different methods differentiate each other by their unique design of the aggregator \citep{kipf2016semi,hamilton2017inductive,velivckovic2017graph}. 
Nevertheless, spatial convolutions are usually built upon the first-order approximation of eigendecomposition by the graph Laplacian, and they are proved recklessly removing high-pass information in the graph \citep{wu2019simplifying,oono2019graph,bo2021beyond}. Consequently, many local details are lost during the forward propagation. The information loss becomes increasingly ineluctable along with the raised number of layers or the expanded range of neighborhoods. 
This deficiency limits the expressivity of GNNs and partly gives rise to the oversmoothing issue of a deep GNN. Alternatively, a few existing spectral-involved message passing schemes use eigenvectors to feed the projected node features into the aggregator \citep{stachenfeld2020graph,balcilar2021breaking,beaini2021directional}. While the eigenvectors capture the directional flow in the input by Fourier transforms, they overlook the power of multi-scale representation, which is essential to preserve sufficient information in different levels of detail. Consequently, neither the vanilla spectral graph convolution nor eigenvector-based message passing is capable of learning stable and energy-preserving representations.

\begin{figure}[t]
    \centering
    \includegraphics[width=\linewidth]{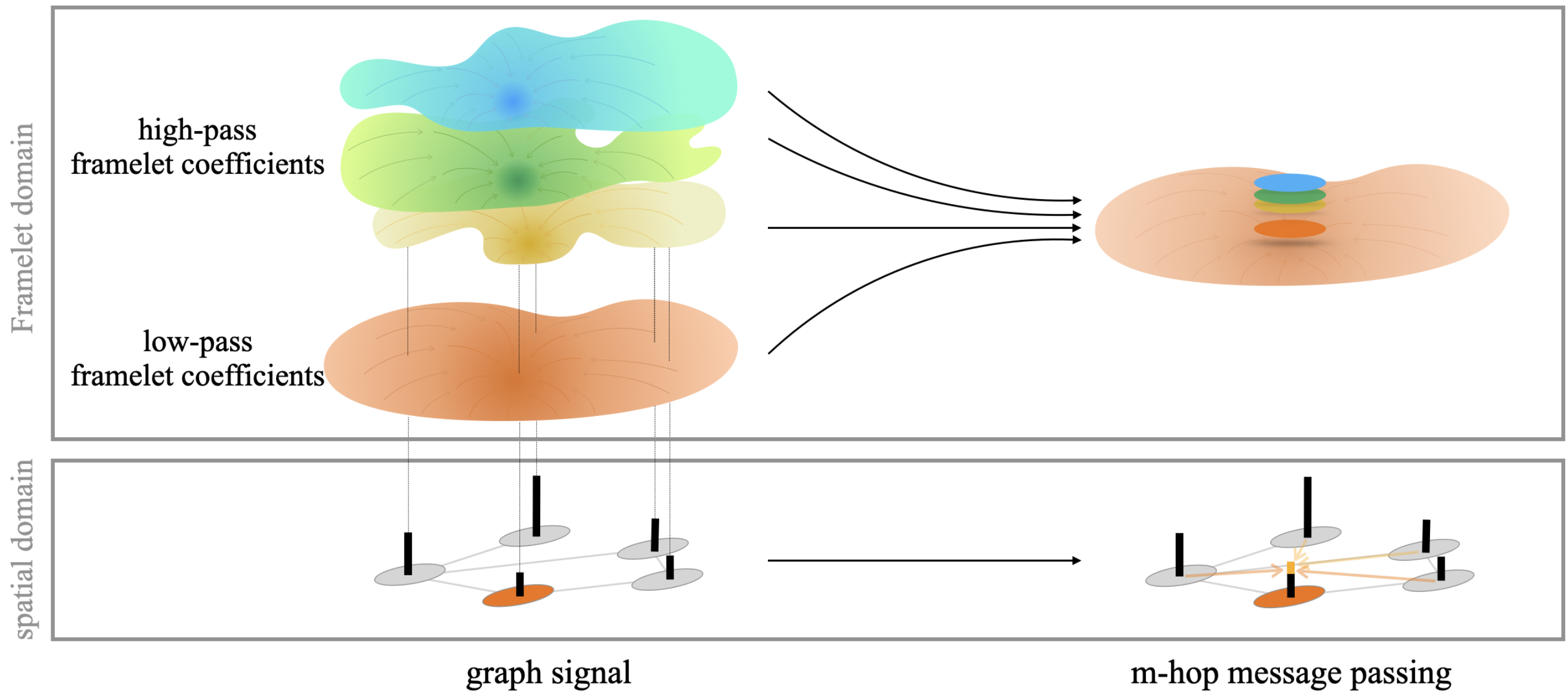}
    \caption{An illustrative workflow of the proposed framelet message passing. An input graph signal is first decomposed into multi-scale coefficients (colored polygons) in the framelet domain. In a convolution layer, each of the framelet coefficients aggregates m-hop neighbors' coefficients from the same level and scale to update its representation. The multi-scale node representation is then summed up as the propagated new representation of the node. In comparison, multiple spatial-based graph convolution layers are required to accelerate the same range of node information, and they are poor at memorizing long-range information.}
    \label{fig:architecture}
\end{figure}

To tackle the issue of separately constructing spectral graph convolutions or spatial message passing rules, this work establishes a spectral message passing scheme with multiscale graph framelet transforms. For a given graph, framelet decomposition generates a set of framelet coefficients in the spectral domain with low-pass and high-pass filters. The coefficients with respect to individual nodes then follow the message passing scheme \citep{gilmer2017neural} to integrate their neighborhood information from the same level. The proposed framelet message passing (FMP) has shown promising theoretical properties. 

First, \textbf{it can limit oversmoothing with a non-decay Dirichlet energy during propagation}, and the Dirichlet energy would not explode when the network goes deep.
Unlike the conventional message passing convolutions that have to repeat $m$ times to cover a relatively large range of $m$-hop neighbors for a central node, the framelet message passing reaches out all $m$-hop neighbors in a single graph convolutional layer. Instead of cutting down the influence of distant neighboring nodes by the gap to the central node, the framelet way  disperses small and large ranges of neighboring communities in different scales and levels, where in low-pass framelet coefficients the local approximated information is retained, and high-pass coefficients mainly hold local detailed information. On each of the scales when conducting message passing, the $m$-hop information is adaptively accumulated to the central node, which can be considered as an analog to the graph rewiring, and it is helpful for circumventing the long-standing oversmoothing issue in graph representation learning. 

Meanwhile, \textbf{FMP is stable on perturbed node features}. Transforming the graph signal from the spatial domain to the framelet domain divides uncertainties from the corrupted input signal. Through controlling the variance within an acceptable range in separate scales, we prove that the processed representation steadily roams within a range, \textit{i.e.,} the framelet message passing is stable to small input perturbations.

In addition, \textbf{FMP bypasses unnecessary spectral transforms and improves convolutional efficiency}. On top of approaching long-range neighbors in one layer, FMP also avoids the inverse framelet transforms in the traditional framelet convolution \citep{zheng2021framelets} and integrates the low and high pass features by adjusting their feature-wise learnable weights in the aggregation.

The rest of the paper starts by introducing the message passing framework in Section~\ref{sec:preliminary}, and discussing its weakness in the oversmoothing issue in Section~\ref{sec:oversmoothing}. Based on the established graph Framelet system and its favorable properties (Section~\ref{sec:ufs}). Based on this, Section~\ref{sec:fmp} gives the two variants of the proposed FMP, whose energy-preserving effect and stability are justified in Section~\ref{sec:fmp_oversmoothing} and \ref{sec:fmp_stability}, respectively. The empirical performances of FMP are reported in Section~\ref{sec:exp} for node classification tasks on homogeneous and heterogeneous graphs, where FMP achieves state-of-the-art performance. We review the previous literature in the community in Section~\ref{sec:review}, and then conclude the work in Section~\ref{sec:conclusion}.

\section{Graph Representation Learning with Message Passing}
\label{sec:preliminary}
An undirected attributed graph $\gG=(\gV,\gE,\mX)$ consists of a non-empty finite set of $n=|\gV|$ nodes $\gV$ and a set of edges $\gE$ between node pairs. Denote $\mA\in\R^{N\times N}$ the (weighted) graph adjacency matrix and $\mX\in\R^{N\times d}$ the node attributes. A graph convolution learns a matrix representation $\mH$ that embeds the structure $\mA$ and feature matrix $\mX=\{\mX_{j}\}_{j=1}^N$ with $\mX_{j}$ for node $j$.

Message passing \citep{gilmer2017neural} defines a general framework of feature propagation rules on a graph, which updates a central node's smooth representation by aggregated information (\textit{e.g.}, node attributes) from connected neighbors. At a specific layer $t$, the propagation for the $i$th node reads
\begin{equation} \label{eq:mp}
\begin{aligned}
    \mX_i^{(t)} &= \gamma\left(\mX^{(t-1)}_i, \mZ^{(t)}\right)\\
    \mZ^{(t)} &= \square_{j\in \mathcal{N}(i)} \phi(\mX^{(t-1)}_i,\mX^{(t-1)}_j,\mA_{ij}),
\end{aligned}
\end{equation}
where $\square(\cdot)$ is a differentiable and permutation invariant aggregation function, such as summation, average, or maximization. Next, the aggregated representation of neighbor nodes $\mZ^{(t)}$ is used to update the central node's representation, where two example operations are addition and concatenation. Both $\gamma(\cdot)$ and $\phi(\cdot)$ are differentiable aggregation functions, such as MLPs. The node set $\gN(i)$ includes $\gV_i$ and other nodes that are connected directly with $\gV_i$ by an edge, which we call node $\gV_i$'s 1-hop neighbors.

The majority of (spatial-based) graph convolutional layers are designed following the message passing scheme when updating the node representation. For instance, GCN \citep{kipf2016semi} adds up the degree-normalized node attributes from neighbors (including itself) and defines
\begin{equation*} 
    \mX_i^{(t)} = \sigma \left(\sum_{j\in \mathcal{N}(i)\cup\{i\}}\frac1{\sqrt{1+ d_i}\sqrt{1 +d_j}}\mX_j^{(t-1)}\mW \right),
\end{equation*}
where $\mW$ is learnable weights, $d_i= \sum_{j\in \mathcal{N}(i)}\mA_{i,j}$ and $\mD={\rm diag}(d_1,\dots,d_N)$ is the degree matrix for $\mA$. Instead of the pre-defined adjacency matrix, GAT \citep{velivckovic2017graph} aggregates neighborhood attributes by learnable attention scores and \textsc{GraphSage} \citep{hamilton2017inductive} averages the contribution from the sampled neighborhood. Alternatively, GIN \citep{xu2019powerful} attaches a custom number of MLP layers for $\square(\cdot)$ after the vanilla summation. While these graph convolutions construct different formulations, they merely make a combination of inner product, transpose, and diagonalization operations on the graph adjacency matrix, which fails to distinguish different adjacency matrice by the 1-WL test \citep{balcilar2021breaking}. In contrast, spectral-based graph convolutions require eigenvalues or eigenvectors to construct the update rule and create expressive node representations towards the theoretical limit of the 3-WL test. 

On top of the expressivity issue, spectral-based convolutions have also proven to ease the stability concern that is widely observed in conventional spatial-based methods. To circumvent the two identified problems, we propose a spectral-based message passing scheme for graph convolution, which is stable and has the ability to alleviate the oversmoothing issues.

\section{Depth Limitation of Message Passing by Oversmoothing}
\label{sec:oversmoothing}
The depth limitation prevents the performance of many deep GNN models. The problem was first identified by \cite{li2018deeper}, where many popular spatial graph convolutions apply Laplacian smoothing to graph embedding. Shallow GNNs perform global denoising to exclude local perturbations and achieve state-of-the-art performance in many semi-supervised learning tasks. While a small number of graph convolutions has limited expressivity, deeply stacking the layers leads the connected nodes to converge to indistinguishable embeddings. Such an issue is widely known as \emph{oversmoothing}.

One way to understand the oversmoothing issue is through the Dirichlet energy, which measures the average distance between connected nodes in the feature space. The graph Laplacian of $\gG$ is defined by $\gL = \mD - \mA$.  Let $\tilde{\mA}:=\mA+\mI_N, \tilde{\mD}:=\mD+\mI_N$ be the adjacent and degree matrix of graph $\gG$ augmented with self-loops and the normalized graph Laplacian is defined by $\widetilde{\gL}:=\widetilde{\mD}^{-1/2}\gL\widetilde{\mD}^{-1/2}$.  Formally, the Dirichlet energy of a node feature $\mX$ from $\gG$ with normalized $\widetilde{\gL}$ is defined by  
\begin{equation*}
\begin{aligned}
    E(\mX) &= \tr(\mX^\top\widetilde{\gL}\mX)= \frac{1}{2}\sum \mA_{ij}\left(\frac{\mX_i}{\sqrt{1+d_i}}-\frac{\mX_j}{\sqrt{1+d_j}}\right)^2.
\end{aligned}
\end{equation*}

The energy evolution provides a direct indicator for the degree of feature expressivity in the hidden space. For instance, \cite{cai2020note} observed that GCN \citep{kipf2016semi} has the Dirichlet energy decaying rapidly to zero as the network depth increases, which indicates the local high-frequency signals are ignored during propagation. It is thus desired that the Dirichlet energy of the encoded features is bounded for deep GNNs.

We start with GCN as an example to illustrate the cause of oversmoothing. Set $\mP:=\mI_N-\widetilde{\gL}$. It is observed in \cite{oono2019graph} that a multi-layer GCN simply writes in the form  $f=f_L \circ \cdots \circ f_1$ where $f_l: \mathbb{R}^{N \times d} \rightarrow \mathbb{R}^{N \times d}$ is defined by $f_l(\mX):=\sigma\left(\mP \mX \mW_l\right)$.  Although the asymptotic behavior of the output $X^{(L)}$ of the $\mathrm{GCN}$ as $L \rightarrow \infty$ is investigated in \cite{oono2019graph} with oversmoothing property, to facilitate the analysis for our model MPNN, we also sketch the proof for completeness here.

\begin{lemma}[\cite{oono2019graph}]
\label{lem:activation}
For any $\mX \in \mathbb{R}^{n\times d}$,  we have 
\begin{enumerate}
    \item[1)]  $E(\mX \mW_l)\leq \mu_{\max}^2 E(\mX)  $, where $\mu_{\max} $ denotes the  singular value for $\mW_l$  with the largest absolute value.
    \item[2)] $ E( \sigma(\mX) ) \leq  E(\mX) $.
\end{enumerate}
\end{lemma}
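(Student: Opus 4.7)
The plan is to prove the two parts separately, relying on linear algebra for (1) and on pointwise properties of the activation for (2).

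For part (1), I would first rewrite the Dirichlet energy in a Frobenius-norm form. Since $\widetilde{\gL}$ is symmetric positive semidefinite, it admits a square root $\widetilde{\gL}^{1/2}$, and
\[
E(\mX) \;=\; \tr(\mX^\top \widetilde{\gL}\mX) \;=\; \bigl\| \widetilde{\gL}^{1/2}\mX \bigr\|_F^2.
\]
Then $E(\mX\mW_l) = \|\widetilde{\gL}^{1/2}\mX\mW_l\|_F^2$. The standard submultiplicative inequality $\|AB\|_F \le \|A\|_F \|B\|_{\mathrm{op}}$ applied with $A=\widetilde{\gL}^{1/2}\mX$ and $B=\mW_l$, together with $\|\mW_l\|_{\mathrm{op}}=\mu_{\max}$, yields $\|\widetilde{\gL}^{1/2}\mX\mW_l\|_F \le \mu_{\max}\|\widetilde{\gL}^{1/2}\mX\|_F$. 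Squaring gives the claim. (Equivalently, one could diagonalize $\mW_l^\top\mW_l$ and bound $\tr(\mW_l^\top \mX^\top \widetilde{\gL}\mX \mW_l)$ by $\mu_{\max}^2 \tr(\mX^\top\widetilde{\gL}\mX)$ using cyclicity of the trace.)

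For part (2), I would work from the edge-sum representation of $E$ stated in the paper and reduce to a pointwise inequality on each edge. Taking $\sigma$ to be the ReLU nonlinearity used in the GCN analysis, two facts are relevant: ReLU is positively homogeneous, so $\sigma(\mX_i)/\sqrt{1+d_i} = \sigma\bigl(\mX_i/\sqrt{1+d_i}\bigr)$ since $1/\sqrt{1+d_i}>0$; and ReLU is $1$-Lipschitz, so $(\sigma(a)-\sigma(b))^2 \le (a-b)^2$ for any scalars $a,b$ and, by applying this coordinate-wise, for any vectors. Setting $a_i=\mX_i/\sqrt{1+d_i}$ and $a_j=\mX_j/\sqrt{1+d_j}$ and summing the resulting inequality weighted by $\tfrac12 \mA_{ij}$ over all edges gives $E(\sigma(\mX))\le E(\mX)$.

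Neither step is deep; the main conceptual care is in part (2), where the normalization by $\sqrt{1+d_i}$ must be moved inside $\sigma$ before invoking the Lipschitz bound. This is precisely where positive homogeneity of ReLU is essential — for a general $1$-Lipschitz activation the two steps cannot be decoupled because $\sigma(\mX_i)/\sqrt{1+d_i} \ne \sigma(\mX_i/\sqrt{1+d_i})$ in general. I would therefore explicitly state that the argument uses ReLU (or more generally any positively homogeneous non-expansive activation), so that the reader sees which structural property of $\sigma$ is being exploited. Aside from this subtlety, the remaining work is direct algebraic manipulation of sums and norms.
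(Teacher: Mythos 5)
Your proof is correct. One thing to note: the paper does not actually prove this lemma --- it is quoted from \cite{oono2019graph}, and the proof sketched immediately afterwards belongs to the subsequent lemma on the spectrum of $\mP$ --- so there is no in-paper argument to compare against; your argument is essentially the standard one from the cited work. Part (1) is the usual bound $\tr(\mW_l^\top \mX^\top \widetilde{\gL}\mX \mW_l)\le \mu_{\max}^2\,\tr(\mX^\top\widetilde{\gL}\mX)$, obtained either through $\|\widetilde{\gL}^{1/2}\mX\mW_l\|_F\le \|\widetilde{\gL}^{1/2}\mX\|_F\,\mu_{\max}$ or by cyclicity of the trace, and part (2) is the standard edge-wise estimate from the representation $E(\mX)=\tfrac12\sum \mA_{ij}\bigl\|\mX_i/\sqrt{1+d_i}-\mX_j/\sqrt{1+d_j}\bigr\|^2$. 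You also correctly isolate the only delicate point: the degree normalization must be moved inside $\sigma$ before invoking non-expansiveness, which uses positive homogeneity of ReLU (being $1$-Lipschitz alone would not let you decouple the two steps), and this is precisely why the claim is stated for ReLU-type activations.
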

\begin{lemma}[\cite{oono2019graph}]
      Without loss of generality, we suppose $\gG$  is connected. Let $\lambda_1 \leq \cdots \leq \lambda_n$ be the eigenvalue of $\mP$ sorted in ascending order. Then, we have
      \begin{enumerate}
          \item[1)] $-1<\lambda_1, \lambda_{n-1}<1$, and $\lambda_{n}=1$, hence $\lambda_{\max}:= \max_{i=1}^{n-1}\left|\lambda_i\right|<1$.
          \item[2)] $ E(\mP\mX) \leq \lambda_{\max}^2 E(\mX) $. 
      \end{enumerate}  
\end{lemma}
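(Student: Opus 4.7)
The plan is to (i) recognize $\mP$ as the standard symmetric GCN propagation operator so that Perron--Frobenius applied to an irreducible aperiodic stochastic matrix yields Part~1) cleanly, and then (ii) use that $\mP$ and $\widetilde{\gL}$ commute to diagonalize both simultaneously and reduce Part~2) to a term-by-term scalar inequality.

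\textbf{Part 1).} First I rewrite $\mP$ in a more tractable form. Using $\widetilde{\mA}=\mA+\mI_N$ and $\widetilde{\mD}=\mD+\mI_N$, the algebraic identity $\widetilde{\mA}+(\mD-\mA)=\widetilde{\mD}$ multiplied on both sides by $\widetilde{\mD}^{-1/2}$ gives
\begin{equation*}
\mP \;=\; \mI_N-\widetilde{\gL} \;=\; \widetilde{\mD}^{-1/2}\widetilde{\mA}\,\widetilde{\mD}^{-1/2}.
\end{equation*}
This matrix is symmetric and similar, via conjugation by $\widetilde{\mD}^{1/2}$, to the row-stochastic random-walk matrix $\widetilde{\mD}^{-1}\widetilde{\mA}$ on $\gG$ with a self-loop attached to every vertex. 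Connectedness of $\gG$ makes this walk irreducible, and the self-loops make it aperiodic at every vertex. The Perron--Frobenius theorem then yields a simple largest eigenvalue equal to $1$, with Perron eigenvector proportional to $\widetilde{\mD}^{1/2}\vone$, and every other eigenvalue strictly less than $1$ in modulus. Translated back to $\mP$: $\lambda_n=1$, $\lambda_{n-1}<1$, $\lambda_1>-1$, and hence $\lambda_{\max}=\max_{i\le n-1}|\lambda_i|<1$.

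\textbf{Part 2).} Since $\widetilde{\gL}=\mI_N-\mP$, the two matrices are symmetric and commute, so they share an orthonormal eigenbasis $\{\vv_1,\dots,\vv_n\}$ with $\mP\vv_i=\lambda_i\vv_i$ and $\widetilde{\gL}\vv_i=(1-\lambda_i)\vv_i$; in particular $\vv_n\in\ker\widetilde{\gL}$. Expanding the columns of $\mX$ in this basis, $\mX=\sum_i \vv_i\vc_i^\top$ with $\vc_i:=\mX^\top\vv_i\in\R^d$, orthonormality gives
\begin{equation*}
E(\mX)=\sum_{i=1}^{n}(1-\lambda_i)\|\vc_i\|^2=\sum_{i=1}^{n-1}(1-\lambda_i)\|\vc_i\|^2,\qquad E(\mP\mX)=\sum_{i=1}^{n-1}\lambda_i^{2}(1-\lambda_i)\|\vc_i\|^2,
\end{equation*}
where the $i=n$ contributions vanish because $1-\lambda_n=0$. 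Since $(1-\lambda_i)\ge 0$ and $\lambda_i^{2}\le\lambda_{\max}^2$ for each $i\le n-1$, a term-by-term bound delivers $E(\mP\mX)\le\lambda_{\max}^2 E(\mX)$.

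\textbf{Main obstacle.} The only delicate point is the strict inequality $\lambda_1>-1$: symmetry together with nonnegativity of the entries of $\mP$ gives $\lambda_1\ge -1$ for free, but equality would force a bipartite-like decomposition. Aperiodicity induced by the self-loops in $\widetilde{\mA}$ is precisely what rules this out, and this is the reason self-loops are baked into the definition of $\mP$. Everything else is bookkeeping on the shared eigenbasis.
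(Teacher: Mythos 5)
Your proof is correct and follows essentially the same route as the paper: rewrite $\mP=\widetilde{\mD}^{-1/2}\widetilde{\mA}\widetilde{\mD}^{-1/2}$, pass by similarity to the stochastic random-walk matrix $\widetilde{\mD}^{-1}\widetilde{\mA}$ and apply Perron--Frobenius for Part~1), then use the shared eigenbasis of $\mP$ and $\widetilde{\gL}=\mI_N-\mP$ to bound the energy for Part~2). If anything, you are more explicit than the paper on two points it leaves implicit: that the self-loops give aperiodicity and hence rule out the eigenvalue $-1$, and the term-by-term eigenbasis computation (with the $i=n$ term killed by $1-\lambda_n=0$) behind the bound $E(\mP\mX)\le\lambda_{\max}^2E(\mX)$.
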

\begin{proof}
Note that $\mP$ and $\widetilde{\gL}$ share the same eigenspace and the Dirichlet energy is closely associated with the spectrum of $\mP$ and $\widetilde{\gL}$. Recall that $\mP = \mI_N-\widetilde{\gL} = \widetilde{\mD}^{-1/2}\widetilde{\mA}\widetilde{\mD}^{-1/2}$. Since the matrix similarity between $\mP$ and $\widetilde{\mD}^{-1} \widetilde{\mA}$, it suffices to investigate the spectrum of $ \widetilde{\mD}^{-1} \widetilde{\mA} $. Let $\tilde{\lambda}_1 \leq \cdots \leq \tilde{\lambda}_n$ be the eigenvalue of $\mP$ sorted in ascending order. It can be verified that $\widetilde{\mD}^{-1} \widetilde{\mA}$ is a stochastic matrix; hence, by Perron–Frobenius theorem, we have $-1<\tilde{\lambda}_1, \leq \tilde{\lambda}_{n-1}<1$ and $\tilde{\lambda_n}=1$ and $\mathbf{1}$  is the only eigenvector corresponding to $\lambda_{n}=1$. Then, we conclude the first claim. Moreover, we obtain $\widetilde{\gL}$ is symmetric positive semi-definite matrix with eigenvalues $0=\lambda^{\gL}_1 <\lambda^{\gL}_2<\cdots < \lambda^{\gL}_{n}<2$ and $\mathbf{1}$  is the only eigenvector corresponding to $\lambda^{\gL}_1=0$. Combing the fact that $E(\mX) = \tr(\mX^\top\widetilde{\gL}\mX)$ we conclude $E(\mP\mX) \leq \lambda_{\max}^2 E(\mX) $. It means the graph convolution contracts the energy by a factor of $\lambda_{\max}^2$, and we also get a by-product that the convolution shrinks the feature $\mX$ except for the constant component.
\end{proof}  

By the above Lemmas, we obtain the \emph{oversmoothing} property of GCN as 
\begin{theorem}[\cite{oono2019graph}] Let ${\rm GCN}_L:=f_L\circ f_{L-1}\circ \cdots \circ f_1$ be a graph convolutional network with $L$ layers with input feature $X$. Then, the Dirichlet energy of ${\rm GCN}_L$ is bounded by
    \begin{equation}
        E({\rm GCN}_L(\mX)) < \left(\mu_{\max} \lambda_{\max}\right)^{2L}E(\mX).
    \end{equation}
Suppose the $\mu_{\max} \lambda_{\max} < 1$, then the energy decays  exponentially with layers.
\end{theorem}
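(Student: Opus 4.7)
The plan is to derive the bound by a straightforward layer-by-layer induction, assembling the per-layer contraction factor from the three ingredients already isolated in the two preceding lemmas: multiplication on the right by $\mW_l$, multiplication on the left by $\mP$, and application of the elementwise nonlinearity $\sigma$.

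First I would analyze a single layer $f_l(\mX) = \sigma(\mP \mX \mW_l)$ by composing the bounds in the order they appear. Starting from the innermost operation, the left multiplication by $\mP$ gives $E(\mP \mX) \leq \lambda_{\max}^2 E(\mX)$ from the second lemma. The right multiplication by $\mW_l$ then gives $E(\mP \mX \mW_l) \leq \mu_{\max}^2 E(\mP \mX)$ from part~1 of the first lemma. Finally the elementwise nonlinearity is nonexpansive in Dirichlet energy, $E(\sigma(\mP \mX \mW_l)) \leq E(\mP \mX \mW_l)$, by part~2 of the first lemma. Chaining these three inequalities yields the single-layer contraction
\begin{equation*}
    E(f_l(\mX)) \leq \mu_{\max}^2 \lambda_{\max}^2 \, E(\mX).
\end{equation*}

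Next I would apply this bound iteratively. Writing ${\rm GCN}_L(\mX) = f_L(f_{L-1}(\cdots f_1(\mX)))$, a simple induction on the number of layers gives $E({\rm GCN}_L(\mX)) \leq (\mu_{\max} \lambda_{\max})^{2L} E(\mX)$. The strict inequality claimed in the theorem follows from the fact, established in the second lemma, that $\lambda_{\max} < 1$ strictly, together with the observation that the left multiplication by $\mP$ is a strict contraction on the orthogonal complement of the constant eigenvector; as long as $\mX$ has a non-constant component (which is the generic case and the only regime of interest), at least one step is strict.

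I do not anticipate a major obstacle, since all the heavy lifting has been done in the two lemmas. The only subtle bookkeeping point is to make sure the right-multiplication bound from part~1 of the first lemma is applied \emph{after} the left-multiplication bound, so that the matrix whose energy is being bounded is always the one produced by the previous step; this ordering is unambiguous once one notes that $\mP$ and $\mW_l$ act on different sides of $\mX$ and therefore commute in the sense that the bound can be applied in either order with the same final constant. The second minor point is to justify that the strict inequality is preserved under composition, which follows from $\mu_{\max}\lambda_{\max} < 1$ together with the strictness of at least one of the spectral contractions along the chain.
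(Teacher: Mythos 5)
Your proposal follows essentially the same route as the paper, which obtains the theorem directly by chaining the two lemmas per layer, $E(f_l(\mX)) \leq \mu_{\max}^2\lambda_{\max}^2 E(\mX)$, and iterating over the $L$ layers. The only point worth noting is that the lemmas yield non-strict inequalities, so the strict bound in the statement needs exactly the caveat you mention (a non-constant component of $\mX$, so that the contraction by $\mP$ is strict), which the paper itself glosses over by citing \cite{oono2019graph}.
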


\section{Undecimated Framelet System}
\label{sec:ufs}
The $\{\vg_\ell\}_{\ell=1}^M$ from $l_2(\gG)$ is said a \emph{frame} for $l_2(\gG)$ is a collection of elements  if there exist constants $A$ and $B$, $0<A\le B< \infty$, such that
\begin{equation}
\label{defn:frame}
    A\|\vf\|^2 \le \sum_{\ell=1}^M |\ipG{\vf,\vg_\ell}|^2 \le B\|\vf\|^2\quad \forall \vf \in l_2(\gG).
\end{equation}
Here $A, B$ are called \emph{frame bounds}. When  $A=B=1$, $\{\vg_\ell\}_{\ell=1}^M$ is said to form a \emph{tight frame} for $l_2(\gG)$. In this case, \eqref{defn:frame} is alternatively written as
\begin{equation}
\label{defn:frame2}
    \vf = \sum_{\ell=1}^M \ipG{\vf,\vg_\ell}\vg_\ell,
\end{equation}
which follows from the polarization identity. 
For a tight frame $\{\vg_\ell\}_{\ell=1}^M$ with $\|\vg_\ell\|=1$ for $\ell=1,\dots,M$, there must holds $M=N$, and $\{\vg_\ell\}_{\ell=1}^N$ forms an orthonormal basis for $l_2(\gG)$. Tight frames ensures the one-to-one mapping between framelet coefficients $\ipG{\vf,\vg_\ell}$ and the original vector $\vf$ \citep{Daubechies1992}.

Let $\Psi=\{\alpha;\beta^{(1)},\dots,\beta^{(K)}\}$ be a set of functions in $L_1(\R)$, where $L_1(\R)$ refers to the functions that are absolutely integrable on $\R$ with respect to the Lebesgure measure. The \emph{Fourier transform} $\FT{\gamma}$ of a function $\gamma\in L_1(\R)$ is defined by $\FT{\gamma}(\xi):=\int_{\R}\gamma(t)e^{-2\pi i t\xi}\: \mathrm{d} t$, where $\xi\in\R$. The Fourier transform can be extended from $L_1(\R)$ to $L_2(\R)$, which is the space of square-integrable functions on $\R$. See \cite{stein2011fourier} for further information.

A \emph{filter bank} is a set of filters, where a \emph{filter (or mask)} $\mask:=\{\mask_k\}_{k\in\sZ}\subseteq \sC$ is a complex-valued sequence in $l_1(\sZ):=\{h=\{h_k\}_{k\in\sZ}\subseteq\sC : \sum_{k\in\sZ} |h_k|<\infty \}$. The \emph{Fourier series} of a sequence $\{\mask_k\}_{k\in\sZ}$ is the $1$-periodic function $\FT{\mask}(\xi):=\sum_{k\in\sZ}\mask_k e^{-2\pi i k\xi}$ with $\xi\in\R$. Let $\Psi=\{\alpha;\beta^{(1)},\dots,\beta^{(K)}\}$ be a set of \emph{framelet generators} associated with a filter bank $\boldsymbol{\eta}:=\{\maska; \maskb[1],\ldots,\maskb[K]\}$. Then the Fourier transforms of the functions in $\Psi$ and the filters' corresponding Fourier series in $\boldsymbol{\eta}$ satisfy
\begin{equation}
\label{eq:refinement}
    \FT{\alpha}(2\xi) = \FS{\maska}(\xi)\FT{\alpha}(\xi),\quad
    \FT{\beta^{(r)}}(2\xi) = \FS{\maskb}(\xi)\FT{\alpha}(\xi),\quad r=1,\ldots,K, \; \xi\in\R.
\end{equation}

We give two typical examples of filters and scaling functions, as follows.

\paragraph{Example.1} The first one is the Haar-type filters with one high pass: for $\xi \in \mathbb{R}$,
\begin{equation}
   \label{eqs:mask.numer.s4}
    \widehat{a}(\xi)=\cos (\xi / 2),\quad  \widehat{b^{(1)}}(\xi)=\sin (\xi / 2)
\end{equation}
with scaling functions
\begin{equation*}
    \FT{\alpha}(\xi) = \frac{\sin(\xi/2)}{\xi/2},\quad 
    \FT{\beta}(\xi) = \sqrt{1-\left(\frac{\sin(\xi/2)}{\xi/2}\right)^2}.
\end{equation*}


\paragraph{Example.2} Another example of filters and scaling functions with two high passes are from \cite[Chapter~4]{Daubechies1992}:
\begin{subequations}\label{eqs:mask.numer.s3}
\begin{align}
  \FT{\maska}(\xi)&: =
  \left\{\begin{array}{ll}
    1, & |\xi|<\frac{1}{8},\\[1mm]
    \cos\bigl(\frac{\pi}{2}\hspace{0.3mm} \nu(8|\xi|-1)\bigr), & \frac{1}{8} \le |\xi| \le \frac{1}{4},\\[1mm]
    0, & \frac14<|\xi|\le\frac12,
    \end{array}\right. \\[1mm]
  \FT{\maskb[1]}(\xi)&:  =\left\{\begin{array}{ll}
    0, & |\xi|<\frac{1}{8},\\[1mm]
    \sin\bigl(\frac{\pi}{2}\hspace{0.3mm} \nu(8|\xi|-1)\bigr), & \frac{1}{8} \le |\xi| \le \frac{1}{4},\\[1mm]
    \cos\bigl(\frac{\pi}{2}\hspace{0.3mm} \nu(4|\xi|-1)\bigr), & \frac14<|\xi|\le\frac12.
    \end{array}\right.\\[1mm]
  \FT{\maskb[2]}(\xi)&:
  =\left\{\begin{array}{ll}
    0, & |\xi|<\frac{1}{4},\\[1mm]
    \sin\bigl(\frac{\pi}{2}\hspace{0.3mm} \nu(4|\xi|-1)\bigr), & \frac{1}{4} \le |\xi| \le \frac{1}{2},
    \end{array}\right.
\end{align}
\end{subequations}
where
\begin{equation*}
  \nu(t) := t^{4}(35 - 84t + 70t^{2} - 20 t^{3}), \quad t\in \mathbb{R}.
\end{equation*}
The associated framelet generators $\Psi=\{\scala; \scalb^1,\scalb^2\}$ are defined by
\begin{subequations}\label{eqs:scal.numer.s3}
\begin{align}
  \FT{\scala}(\xi)&=
  \left\{\begin{array}{ll}
    1, & |\xi|<\frac{1}{4},\\[1mm]
    \cos\bigl(\frac{\pi}{2}\hspace{0.3mm} \nu(4|\xi|-1)\bigr), & \frac{1}{4} \le |\xi| \le \frac{1}{2},\\[1mm]
    0, & \hbox{else},
    \end{array}\right. \\[1mm]
  \FT{\scalb^1}(\xi)&= \left\{\begin{array}{ll}
    \sin\left(\frac{\pi}{2}\hspace{0.3mm} \nu(4|\xi|-1)\right), & \frac{1}{4}\le|\xi|<\frac{1}{2},\\[1mm]
    \cos^2\left(\frac{\pi}{2}\hspace{0.3mm} \nu(2|\xi|-1)\right), & \frac{1}{2} \le |\xi| \le 1,\\[1mm]
    0, & \hbox{else},
    \end{array}\right.\\[1mm]
  \FT{\scalb^2}(\xi)&= \left\{\begin{array}{ll}
   0, &|\xi|<\frac{1}{2},\\[1mm]
    \cos\left(\frac{\pi}{2}\hspace{0.3mm} \nu(2|\xi|-1)\right) \sin\left(\frac{\pi}{2}\hspace{0.3mm} \nu(2|\xi|-1)\right), & \frac{1}{2} \le |\xi| \le 1,\\[1mm]
    0, & \hbox{else}.
    \end{array}\right.
\end{align}
\end{subequations}

\begin{figure}[!tp]
    \centering
    \includegraphics[width=0.47\linewidth]{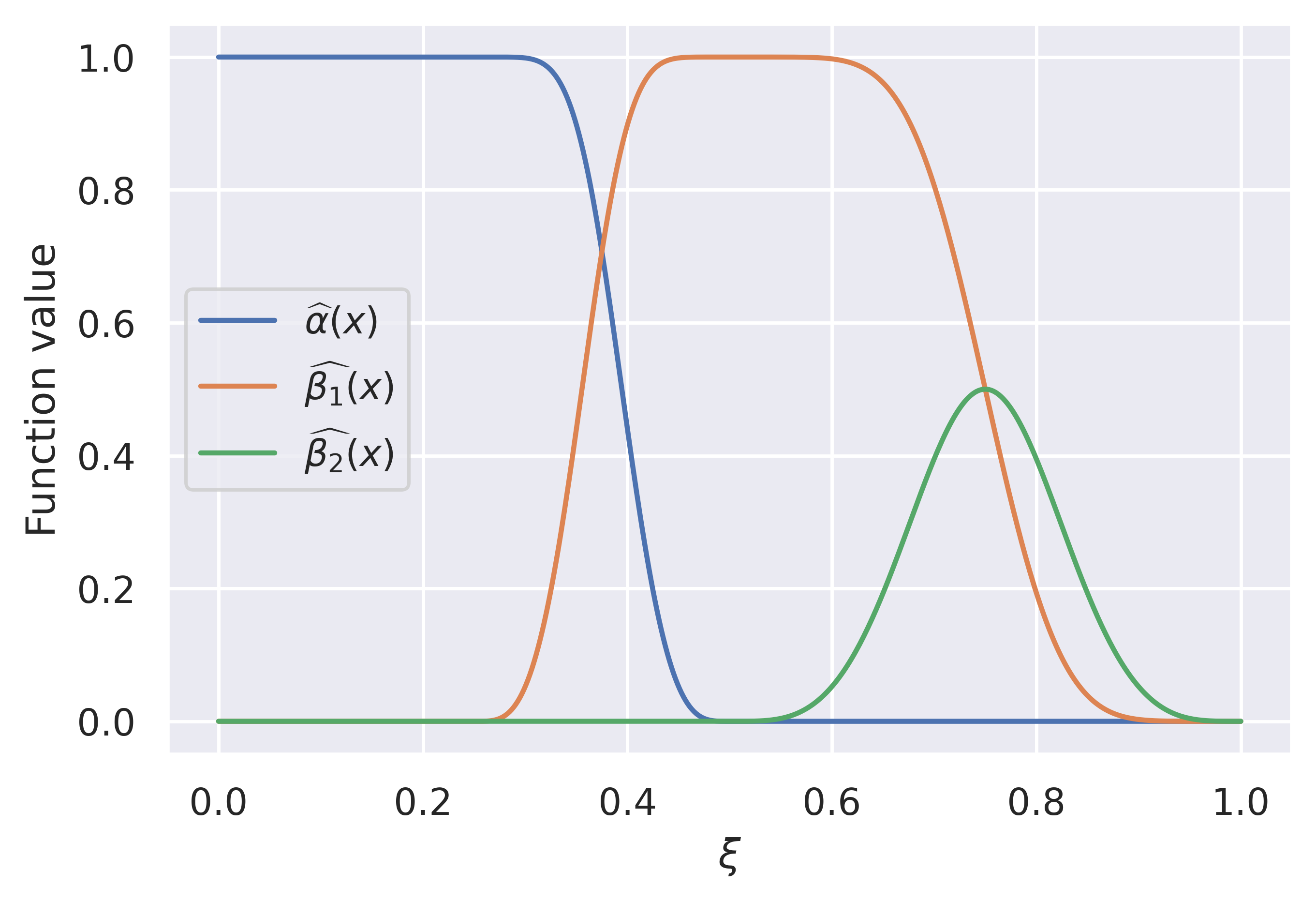}
    \hspace{2mm}
    \includegraphics[width=0.47\linewidth]{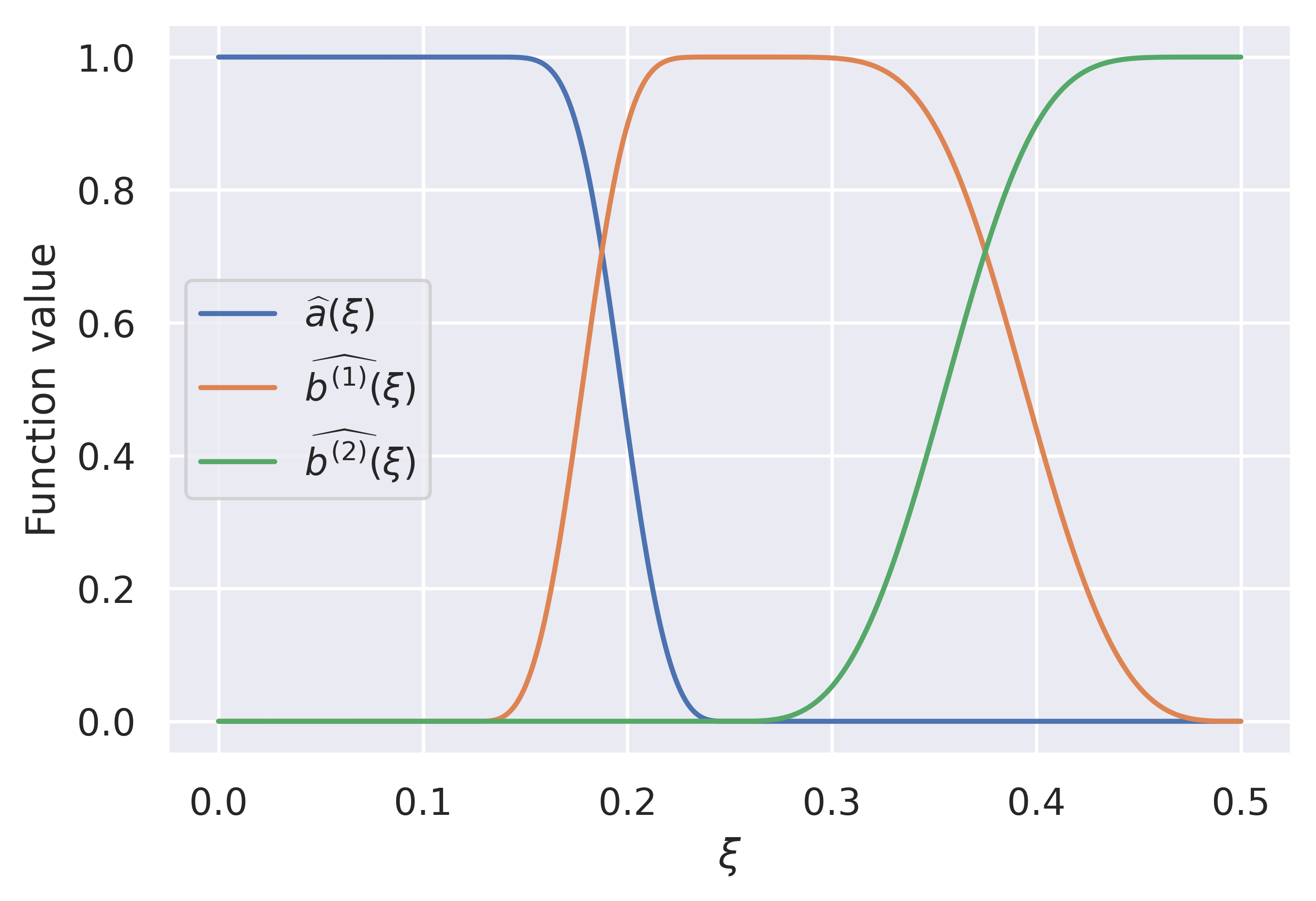}
    \caption{Filters and Scaling functions with two high passes in \eqref{eqs:mask.numer.s3} and \eqref{eqs:scal.numer.s3}.} 
    \label{fig:filter}
\end{figure}

\begin{definition}[Undecimated Framelet System \citep{dong2017sparse,zheng2022decimated}]
Given a filter bank $\boldsymbol{\eta}:=\{a;b^{(1)},\dots,b^{(K)}\}$ and scaling functions $\Psi=\{\alpha;\beta^{(1)},\dots,\beta^{(K)}\}$, an undecimated framelet system $\operatorname{UFS}_{J_1}^{J}(\Psi, \boldsymbol{\eta})$ ($J > J_1$) for $l_2(\gG)$ from $J_1$ to $J$ is defined by
\begin{equation}
\begin{aligned}
\label{defn:UFS}
  \operatorname{UFS}_{J_1}^{J}(\Psi, \boldsymbol{\eta}) &:=\operatorname{UFS}_{J_1}^{J}(\Psi, \boldsymbol{\eta}; \gG) \\
  &:=\left\{\varphi_{J_1, p}: p \in \gV \right\} \cup\{\psi_{l,p}^{(k)}: p \in \gV, l=J_1,\dots,J\}_{k=1}^{K}. \notag
\end{aligned}
\end{equation}
\end{definition}
In this paper, we focus on undecimated framelets, which maintain a constant number of framelets at each level of the decomposition. Decimated framelet systems, on the other hand, can be created by constructing a coarse-grained chain for the graph, as described in detail in \cite{zheng2022decimated}.

To ensure computational efficiency, Haar-type filters are adopted when generating the scaling functions, which defines $\widehat{a}(x) = \cos(x/2)$ and $\widehat{b^{(1)}}(x) = \sin(x/2)$ for $x\in\R$. Alternatively, other types of filters, such as linear or quadratic filters, could also be considered as described in \cite{dong2017sparse}.

Suppose $l\in\sZ$ and $\uG\in\gV$, the \emph{undecimated framelets} $\boldsymbol{\varphi}_{l,p}(v)$ and $\boldsymbol{\psi}_{l,p}^r(v)$, $v\in\gV$ at scale $l$ are \emph{filtered Bessel kernels} (or summability kernels), which are constructed following
\begin{equation}\label{defn:ufra:ufrb}
\begin{aligned}
    \boldsymbol{\varphi}_{l,p}(v) &:= \sum_{\ell=1}^{N} \widehat{\alpha}\left(\frac{\lambda_{\ell}}{2^{l}}\right)
    \overline{\vu_{\ell}(p)}\vu_{\ell}(v), \\
    \boldsymbol{\psi}_{l,p}^r(v) &:= \sum_{\ell=1}^{N} \widehat{\beta^{(r)}}\left(\frac{\lambda_{\ell}}{2^{l}}\right)\overline{\vu_{\ell}(p)}\vu_{\ell}(v), \quad r = 1,\ldots,K.
\end{aligned}
\end{equation}

We say $\boldsymbol{\varphi}_{l,p}(v)$ and $\boldsymbol{\psi}_{l,p}^r(v)$ are with respect to the ``dilation'' at scale $l$ and the ``translation'' for the vertex $p\in\gV$. The construction of framelets are analogs of those of wavelets in $\R^d$. The functions $\alpha,\beta^{(r)}$ of $\Psi$ are named \emph{framelet generators} or \emph{scaling functions} for the undecimated framelet system.

\begin{theorem}[Equivalence Conditions of Framelet Tightness, \citep{zheng2022decimated}]\label{thm:UFS}
Let $\gG=(\gV,\gE,\mW)$ be a graph and $\{(\vu_{\ell},\lambda_{\ell})\}_{\ell=1}^N$ be a set of orthonormal eigenpairs for $l_2(\gG)$. Let $\Psi=\{\alpha;\beta^{(1)},\dots,\beta^{(K)}\}$ be a set of functions in $L_1(\R)$ with respect to a filter bank $\boldsymbol{\eta}=\{\maska;\maskb[1],\ldots,\maskb[K]\}$ that satisfies \eqref{eq:refinement}. An undecimated framelet system is denoted by $\ufrsys[J_1]^{J}(\Psi,\boldsymbol{\eta};\gG), J_1=1,\ldots, J$ ($J\geq1$) with framelets $\cfra$ and $\cfrb{r}$ in \eqref{defn:ufra:ufrb}. Then, the following statements are equivalent.
\begin{itemize}
\item[(i)] For each $J_1=1,\ldots,J$, the undecimated framelet system $\ufrsys[J_1]^{J}(\Psi,\boldsymbol{\eta};\gG)$ is a tight frame for $l_2(\gG)$, that is, $\forall f\in l_2(\gG)$,
    \begin{equation}\label{eq:f:UFS0}
    \|f\|^2 =\sum_{\uG\in V}\Big|\ipG{f,\boldsymbol{\varphi}_{J_1,p}}\Big|^2
    +\sum_{l=J_1}^{J}\sum_{r=1}^K\sum_{\uG\in V}\Big|\ipG{f,\boldsymbol{\psi}_{l,p}^r}\Big|^2.
    \end{equation}

\item[(ii)] For all $f\in l_2(\gG)$ and for $l=1,\ldots,J-1$, the following identities hold:
\begin{align}
&   f = \sum_{\uG\in V} \ipG{f,\cfra[J,\uG]}\cfra[J,\uG]
+\sum_{r=1}^K\sum_{\uG\in V}\ipG{f,\cfrb[J,\uG]{r}}\cfrb[J,\uG]{r},
\label{thmeq:normalization1}\\
& \sum_{\uG\in V} \ipG{f,\cfra[l+1,\uG]}\cfra[l+1,\uG]
= \sum_{\uG\in V} \ipG{f,\cfra}\cfra+
\sum_{r=1}^{K}\sum_{\uG\in V} \ipG{f,\cfrb{r}}\cfrb{r}. \label{thmeq:2scale1}
\end{align}

\item[(iii)] For all $f\in l_2(\gG)$ and for $l=1,\ldots, J-1$, the following identities hold:
\begin{align}
   & \|f\|^2 = \sum_{\uG\in V} \bigl|\ipG{f,\cfra[J,\uG]}\bigr|^{2}
   +\sum_{r=1}^K \sum_{\uG\in V}\bigl|\ipG{f,\cfrb[J,\uG]{r}}\bigr|^{2}, \quad\label{thmeq:normalization2}\\
    & \sum_{\uG\in V} \bigl|\ipG{f,\cfra[l+1,\uG]}\bigr|^{2}
    = \sum_{\uG\in V} \bigl|\ipG{f,\cfra}\bigr|^{2} + \sum_{r=1}^{K}\sum_{\uG\in V} \bigl|\ipG{f,\cfrb{r}}\bigr|^{2}.&\label{thmeq:2scale2}
\end{align}

\item[(iv)] The functions in $\Psi$ satisfy
\begin{align}
   1 = \left|\FT{\scala}\left(\frac{\eigvm}{2^{J}}\right)\right|^{2} + \sum_{r=1}^{K}\left|\FT{\scalb^{(r)}}\left(\frac{\eigvm}{2^{J}}\right)\right|^{2} &\quad \forall
  \ell=1,\ldots,\NV, \label{thmeq:nrm:alpha:beta}\\
     \left|\FT{\scala}\left(\frac{\eigvm}{2^{l+1}}\right)\right|^{2}
    = \left|\FT{\scala}\left(\frac{\eigvm}{2^{l}}\right)\right|^{2} + \sum_{r=1}^{K}\left|\FT{\scalb^{(r)}}\left(\frac{\eigvm}{2^{l}}\right)\right|^{2} &\quad \forall
 \begin{array}{l}
 \ell=1,\ldots,\NV,\\
 l=1,\ldots,J-1.
 \end{array}\label{thmeq:2scale:alpha:beta}
 \end{align}

\item[(v)] The identities in \eqref{thmeq:nrm:alpha:beta} hold and the filters in the filter bank $\boldsymbol{\eta}$ satisfy
\begin{align}
  \left|\FS{\maska}\left(\frac{\eigvm}{2^{l}}\right)\right|^{2} + \sum_{r=1}^{K} \left|\FS{\maskb}\left(\frac{\eigvm}{2^{l}}\right)\right|^{2} = 1 \quad \forall \ell\in\sigma_{\scala}^{(l)},\; l = 2,\ldots,J,
\label{thmeq:2scale:masks}
 \end{align}
with
\[
\sigma_{\scala}^{(l)}:=\left\{\ell\in\{1,\ldots,N\} : \FT{\scala}\left(\frac{\eigvm}{2^l}\right) \neq 0\right\}.
\]
\end{itemize}
\end{theorem}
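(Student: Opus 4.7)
The strategy is to transfer every statement onto the spectral (Fourier) side via the orthonormal eigenbasis $\{\eigfm\}_{\ell=1}^\NV$, and then translate between scaling functions and filters using the refinement identities \eqref{eq:refinement}. The single calculation that drives the entire proof is the following: for any $f\in l_2(\gG)$ with Fourier coefficients $\FT{f}_{\ell}:=\ipG{f,\eigfm}$, the definition \eqref{defn:ufra:ufrb} combined with $\sum_{\uG\in \gV}\eigfm(\uG)\overline{\vu_m(\uG)}=\delta_{\ell m}$ gives
\begin{equation*}
\sum_{\uG\in \gV}\bigl|\ipG{f,\cfra}\bigr|^{2}=\sum_{\ell=1}^\NV \bigl|\FT{\scala}(\eigvm/2^{l})\bigr|^{2}\,|\FT{f}_{\ell}|^{2},
\end{equation*}
and analogously with $\scala,\cfra$ replaced by $\scalb^{(r)},\cfrb{r}$. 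This turns every norm-type statement in (iii) into an equality of quadratic forms in $\{|\FT{f}_{\ell}|^{2}\}_{\ell=1}^\NV$.

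The cycle (i)$\Leftrightarrow$(ii)$\Leftrightarrow$(iii) is purely Hilbert-space bookkeeping. By definition, (i) is \eqref{defn:frame} with $A=B=1$, which is exactly \eqref{eq:f:UFS0}, i.e.\ \eqref{thmeq:normalization2}. The reconstruction \eqref{thmeq:normalization1} in (ii) is the tight-frame synthesis formula and is equivalent to the Parseval identity \eqref{thmeq:normalization2} via the polarization identity; the same trick, applied scale by scale, gives the equivalence \eqref{thmeq:2scale1}$\Leftrightarrow$\eqref{thmeq:2scale2}. Hence it suffices to prove (iii)$\Leftrightarrow$(iv)$\Leftrightarrow$(v).

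For (iii)$\Leftrightarrow$(iv), substituting the spectral identity above into \eqref{thmeq:normalization2} and \eqref{thmeq:2scale2} rewrites each as an equality between quadratic forms in $\{|\FT{f}_\ell|^2\}$. Since $f$ is arbitrary and these forms are diagonal in $\ell$, equating the coefficients of $|\FT{f}_\ell|^2$ for every $\ell$ is equivalent to \eqref{thmeq:nrm:alpha:beta} and \eqref{thmeq:2scale:alpha:beta}; conversely, summing the pointwise-in-$\ell$ identities against $|\FT{f}_\ell|^2$ recovers (iii). For (iv)$\Leftrightarrow$(v), I set $\xi=\eigvm/2^{l+1}$ in \eqref{eq:refinement} to get
\begin{equation*}
\FT{\scala}\bigl(\eigvm/2^{l}\bigr)=\FS{\maska}\bigl(\eigvm/2^{l+1}\bigr)\FT{\scala}\bigl(\eigvm/2^{l+1}\bigr),\qquad \FT{\scalb^{(r)}}\bigl(\eigvm/2^{l}\bigr)=\FS{\maskb}\bigl(\eigvm/2^{l+1}\bigr)\FT{\scala}\bigl(\eigvm/2^{l+1}\bigr),
\end{equation*}
and plug these into \eqref{thmeq:2scale:alpha:beta}. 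The factor $|\FT{\scala}(\eigvm/2^{l+1})|^{2}$ pulls out of the right-hand side, and dividing by it on the support $\sigma_{\scala}^{(l+1)}$ yields \eqref{thmeq:2scale:masks} after the index shift $l\mapsto l+1$; the reverse direction multiplies \eqref{thmeq:2scale:masks} back by $|\FT{\scala}(\eigvm/2^{l+1})|^{2}$.

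The one genuine subtlety, and the main obstacle in writing the argument out cleanly, sits in this last step: the filter identity in (v) is asserted only on $\sigma_{\scala}^{(l)}$, so one must argue that the information lost off this support is harmless. This is where the refinement relations \eqref{eq:refinement} are essential: whenever $\FT{\scala}(\eigvm/2^{l+1})=0$, both $\FT{\scala}(\eigvm/2^{l})$ and every $\FT{\scalb^{(r)}}(\eigvm/2^{l})$ also vanish, so the $\ell$-th summand of \eqref{thmeq:2scale:alpha:beta} degenerates to $0=0$ and places no constraint on the filters. Treating this zero set carefully is what makes (iv)$\Leftrightarrow$(v) a true equivalence rather than merely an implication in one direction, and it is the place where a rigorous proof should spend the most care.
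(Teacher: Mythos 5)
The paper itself offers no proof of this theorem; it is imported verbatim from \citep{zheng2022decimated}, so there is no in-paper argument to compare against. Your proposal is, in substance, the standard proof used in that reference and it is correct: the diagonal spectral identity $\sum_{\uG\in\gV}|\ipG{f,\cfra}|^2=\sum_{\ell=1}^{\NV}|\FT{\scala}(\eigvm/2^l)|^2|\ipG{f,\eigfm}|^2$ reduces (iii) to the pointwise conditions (iv) by testing on $f=\eigfm$, polarization (equality of the self-adjoint partial frame operators and of their quadratic forms) gives (ii)$\Leftrightarrow$(iii), and the refinement relations \eqref{eq:refinement} evaluated at $\xi=\eigvm/2^{l+1}$ give (iv)$\Leftrightarrow$(v), where your observation that $\FT{\scala}(\eigvm/2^{l+1})=0$ forces $\FT{\scala}(\eigvm/2^{l})=\FT{\scalb^{(r)}}(\eigvm/2^{l})=0$ is exactly what makes the restriction to $\sigma_{\scala}^{(l)}$ in (v) lose no information. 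One small inaccuracy to tighten: statement (i) is not ``exactly'' \eqref{thmeq:normalization2}; it is the whole family of identities \eqref{eq:f:UFS0} over $J_1=1,\dots,J$, of which \eqref{thmeq:normalization2} is the case $J_1=J$, and the equivalence with (iii) needs the (easy) extra step of differencing consecutive values of $J_1$ to obtain \eqref{thmeq:2scale2} and telescoping in the converse direction — state this explicitly rather than folding it into ``bookkeeping.''
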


\begin{remark}
    In this paper, we use $2$-norm.
\end{remark}

\section{Graph Framelet Message Passing}
\label{sec:fmp}
The aggregation of framelet coefficients for the neighborhood of the node $i$ takes over up to the $m$-th multi-hop $\mathcal{N}^{m}(i)$. 

\subsection{Graph Framelet Transforms}
In a spectral-based graph convolution, graph signals are transformed to a set of coefficients $\hat{\mX}=\boldsymbol{\gW} \mX^{\rm in}$ in frequency channels by the decomposition operator $\boldsymbol{\gW}$. The learnable filters are then trained for the spectral coefficients to approach node-level representative graph embeddings. 

This work implements undecimated framelet transforms \citep{zheng2021framelets} that generate a set of multi-scale and multi-level \emph{framelet coefficients} for the input graph signal, where the low-pass coefficients include general global trend of $\mX$, and the high-pass coefficients portray the local properties of the graph attributes at different degrees of detail. Consequently, conducting framelet transforms on a graph avoids trivially smoothing out rare patterns to preserve more energy for the graph representation and alleviate the oversmoothing issue during message aggregation. 


Recall that the orthonormal bases at different levels ($l$) and scales  ($r$) formulate the framelet decomposition operators $\boldsymbol{\gW}_{r,l}$, 
, which is applied to obtain the framelet coefficients $\hat{\mX}$. In particular, $\boldsymbol{\gW}_{0,J}$ is composed of the low-pass framelet basis $\boldsymbol{\varphi}_{J,p},\; p\in\gV$ for the low-pass coefficient matrix, \textit{i.e.}, $\hat{\mX}=\boldsymbol{\gW}_{0,J}\mX$ that approximates the global graph information. Meanwhile, the high-pass coefficient matrices $\boldsymbol{\gW}_{r,l}\mX$ with $r=1,\dots,K,\; l=1,\dots,J$ that record detailed local graph characteristics in different scale levels are decomposed by the associated high-pass framelet bases $\boldsymbol{\psi}_{l, p}^{(r)}$. Generally, framelet coefficients at larger scales contain more localized information with smaller energy.

As the framelet bases $\boldsymbol{\varphi}_{l,p},\boldsymbol{\psi}_{l, p}^{(r)}$ are defined by eigenpairs of the normalized graph Laplacian $\widetilde{\gL}$, the associated framelet coefficients can be recursively formulated by filter matrices. Specifically, denote the eigenvectors $\mU=[\vu_1,\dots,\vu_N]\in\R^{N\times N}$ and the eigenvalues $\Lambda=\operatorname{diag}(\lambda_1,\dots,\lambda_N)$ for the normalized graph Laplacian $\widetilde{\gL}$, for the low pass and the $r$th high pass,
\begin{equation} \label{eq:ft_initial}
\begin{aligned}
    &\boldsymbol{\gW}_{0,J}\mX=\mU \widehat{\alpha}\left(\frac{\Lambda}{2}\right) \mU^{\top} \mX , \\
    &\boldsymbol{\gW}_{r,l}\mX=\mU \widehat{\beta^{(r)}}\left(\frac{\Lambda}{2^{l+1}}\right) \mU^{\top} \mX 
    \quad \forall l=1,\dots,J.
\end{aligned}
\end{equation}

Alternatively, Chebyshev polynomials approximation is a valid solution to achieve efficient and scalable framelet decomposition \citep{dong2017sparse,zheng2021framelets} by avoiding time-consuming eigendecomposition. Let $m$ be the highest order of the Chebyshev polynomial involved. Denote the $m$-order approximation of $\alpha$ and $\{\beta^{(r)}\}_{r=1}^{K}$ by $\gT_0$ and $\{\gT_{r}\}_{r=1}^{K}$, respectively. The approximated framelet decomposition operator $\boldsymbol{\gW}^{\natural}_{r,l}$ (including $\boldsymbol{\gW}^{\natural}_{0,J}$) is defined as products of Chebyshev polynomials of the graph Laplacian $\gL$, \textit{i.e.}, 
\begin{equation*}
\boldsymbol{\gW}^{\natural}_{r,l}=
    \begin{cases}
    \gT_0\left(2^{-R} \gL\right), & l=1, \\[1mm]
    \gT_k\left(2^{R+l-1} \gL\right) \gT_0\left(2^{R+l-2} \gL\right) \dots \gT_0\left(2^{-R} \gL\right), & l=2,\dots,J.
    \end{cases}
\end{equation*}
Here the real-value dilation scale $R$ is the smallest integer such that $\lambda_{\max}= \lambda_{N}\leq 2^R\pi$. In this definition, it is required for the finest scale $1/2^{K+J}$ that guarantees $\lambda_{\ell}/2^{K+J-l}\in(0,\pi)$ for $\ell=1, 2, ..., N$.

Chebyshev polynomials approximation not only achieves efficient calculation to obtain framelet coefficients for the nodes, but also collects information from the nodes' neighbors in the framelet domain of the same channel. In particular, the aggregation of framelet coefficients for the neighborhood of the node $i$ takes over up to the $m$-hop $\mathcal{N}^{m}(i)$. Based on this, we propose the general framelet message passing framework.

\subsection{Framelet Message Passing}
We follow the general message passing scheme in Equation~\eqref{eq:mp} and define the vanilla \emph{Framelet message passing} (FMP) for the graph node feature $\mX^{(t)}$ at layer $t$ by 
\begin{equation} 
\begin{aligned}
\label{eq:fmp1}
    \mX_i^{(t)} &= \mX_i^{(t-1)}+\mZ_i^{(t)} \\
    \mZ_i^{(t)} &= \sigma \left(\sum_{j\in  \mathcal{N}^{m}(i)}\left(\sum_{r=1}^K\sum_{l=1}^J (\boldsymbol{\gW}^{\natural}_{r,l})_{i,j}\mX_j^{(t-1)}\Theta_r +(\boldsymbol{\gW}^{\natural}_{0,J})_{i,j}\mX_j^{(t-1)} \Theta_0 \right) \right),
\end{aligned}
\end{equation}
where  the propagated feature $\mZ_i^{(t)}$ at node $i$ sums over the low and high passes coefficients at all scales, $\sigma$ is the ReLU activation function, and $\Theta_r\in \mathbb{R}^{d\times d}$
are learnable parameter square matrices associated with the high passes and low pass, with the size $d$ equal to the number of features of $X^{(t-1)}$.

The framelet message passing has the matrix form 
\begin{equation} 
\begin{aligned}
\label{eq:fmp_mat}
    \mX^{(t)} &= \mX^{(t-1)}+\mZ^{(t)} \\
    \mZ^{(t)} &= \sigma \left(\sum_{r=1}^K\sum_{l=1}^J (\boldsymbol{\gW}^{\natural}_{r,l}) \mX^{(t-1)}\Theta_{r} +(\boldsymbol{\gW}^{\natural}_{0,J})\mX^{(t-1)}\Theta_{0}\right).
\end{aligned}
\end{equation}

As shown in the previous formulation, 
the aggregation takes over up to $m$-hop neighbours of the target node $i$. It exploits the framelet coefficients from an $m$-order approximated $\boldsymbol{\gW}^{\natural}_{r,l}$, which involves higher-order graph Laplacians, \textit{i.e.}, $\gL^m$, to make the spectral aggregation as powerful as conducting an $m$-hop neighborhood spatial MPNN \citep{gilmer2017neural}. Intuitively, the proposed update rule is thus more efficient than the conventional MPNN schemes in the sense that a single FMP layer is sufficient to reach distant nodes from $m$-hops away. Meanwhile, FMP splits framelet coefficients into different scales and levels instead of implementing a rough global summation, which further preserves the essential local and global patterns and circumvents the oversmoothing issue when stacking multiple convolutional layers.

\subsection{Continuous Framelet Message Passing with Neural ODE}
The vanilla FMP in Equation~\eqref{eq:fmp1} formulates a discrete version of spectral feature aggregation of node features. In order to gain extra expressivity for the node embedding, we design a continuous update scheme and formulate an enhanced FMP by neural ODE, such that
\begin{equation}
\begin{aligned}
\label{eq:fmp2}
    &{\partial \mX_i(t)}/{\partial t} = \mZ_i(t) \\
    &\mZ_i(t) = \sum_{j\in  \mathcal{N}^{m}(i)}\left(\sum_{r=1}^K\sum_{l=1}^J (\boldsymbol{\gW}^{\natural}_{r,l})_{i,j}\mX_j {(t)}\Theta_{r} +(\boldsymbol{\gW}^{\natural}_{0,J})_{i,j}\mX_j {(t)}\Theta_{0}\right) \\
    &\mX_i {(0)}=\text{MLP}(\mX_i) .
\end{aligned}
\end{equation}
    Here $\mX_i {(t)}$ denotes the encoded features of node $i$ at some timestamp $t$ during a continuous process.  The initial state is obtained by a simple MLP layer on the input node feature, \textit{i.e.}, $\mX_i {(0)}=\text{MLP}(\mX_i)$. The last embedding $\mX_i {(T)}$ can be obtained by numerically solving Equation~\eqref{eq:fmp2} with an ODE solver, which requires a stable and efficient numerical integrator. Since the proposed method is stable during the evolution within finite time, the vast explicit and implicit numerical methods are applicable with a considerable small step size $\Delta t$. In particular, we implement \textsc{Dormand–Prince5} (DOPRI5) \citep{dormand1980family} with adaptive step size for computational complexity. Note that the practical network depth (\textit{i.e.}, the number of propagation layers) is equal to the numerical iteration number that is specified in the ODE solver. 
    

\section{Limited Oversmoothing in Framelet Message Passing}
\label{sec:fmp_oversmoothing}
Following the discussion of oversmoothing in Section~\ref{sec:oversmoothing}, this section demonstrates that FMP provides a remedy for the oversmoothing predicament by decomposing $\mX$ into low-pass and high-pass coefficients with the energy conservation property.
\begin{proposition}[Energy conservation]
    \begin{equation}\label{eq:DE preserve}
    E(\mX) = E(\boldsymbol{\gW}_{0,J}\mX)+\sum_{r=1}^{K}\sum_{l=1}^{J} E(\boldsymbol{\gW}_{r,l}\mX),
\end{equation}
where $E(\boldsymbol{\gW}_{0,J}\mX)$ and $E(\boldsymbol{\gW}_{r,l}\mX)$ break down the total energy $E(\mX)$ into multi scales, and into low and high passes.
\end{proposition}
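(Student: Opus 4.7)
The plan is to reduce the claimed identity to condition (iv) of Theorem~\ref{thm:UFS} by rewriting each Dirichlet energy in the spectral basis of $\widetilde{\gL}$. Every framelet transform operator is a spectral filter of the form $\mU\,g(\Lambda)\,\mU^{\top}$, so it is diagonalized simultaneously with $\widetilde{\gL}=\mU\Lambda\mU^{\top}$, and the whole identity will collapse to a pointwise statement at each eigenvalue $\lambda_{\ell}$.

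First, I would compute a single term $E(\boldsymbol{\gW}_{r,l}\mX)$ directly. Using $\boldsymbol{\gW}_{r,l}=\mU\,\widehat{\beta^{(r)}}(\Lambda/2^{l+1})\,\mU^{\top}$, the cyclic property of the trace, and $\mU^{\top}\widetilde{\gL}\mU=\Lambda$, one obtains
\begin{equation*}
E(\boldsymbol{\gW}_{r,l}\mX)=\tr\left(\mX^{\top}\mU\,\bigl|\widehat{\beta^{(r)}}(\Lambda/2^{l+1})\bigr|^{2}\Lambda\,\mU^{\top}\mX\right),
\end{equation*}
since the three diagonal matrices on the right commute and $\widehat{\beta^{(r)}}$ is evaluated entrywise along the spectrum. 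The analogous formula holds for the low-pass term with $\bigl|\widehat{\alpha}(\Lambda/2)\bigr|^{2}$ in place of the high-pass modulus square.

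Next, I would sum over $r=1,\dots,K$ and $l=1,\dots,J$ and add the low-pass contribution, pulling $\mX^{\top}\mU$ on the left and $\Lambda\,\mU^{\top}\mX$ on the right outside the sum. What remains inside is the diagonal matrix whose $\ell$-th entry is
\begin{equation*}
\bigl|\widehat{\alpha}(\lambda_{\ell}/2)\bigr|^{2}+\sum_{l=1}^{J}\sum_{r=1}^{K}\bigl|\widehat{\beta^{(r)}}(\lambda_{\ell}/2^{l+1})\bigr|^{2}.
\end{equation*}
I would then invoke condition (iv) of Theorem~\ref{thm:UFS}: the identity \eqref{thmeq:nrm:alpha:beta} at scale $J$ gives the sum at the coarsest level, and the two-scale relation \eqref{thmeq:2scale:alpha:beta} lets me substitute each $|\widehat{\alpha}(\lambda_{\ell}/2^{l+1})|^{2}$ by $|\widehat{\alpha}(\lambda_{\ell}/2^{l})|^{2}$ plus the missing $\sum_{r}|\widehat{\beta^{(r)}}(\lambda_{\ell}/2^{l})|^{2}$ terms. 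Iterating from $l=J$ down to $l=1$ telescopes the whole expression to the constant $1$ at every eigenvalue $\lambda_{\ell}$.

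Consequently the diagonal matrix becomes $\mI_{N}$ and the sum reduces to $\tr(\mX^{\top}\mU\Lambda\mU^{\top}\mX)=\tr(\mX^{\top}\widetilde{\gL}\mX)=E(\mX)$, which is the claimed identity. The only non-trivial bookkeeping is aligning the indexing conventions between the filter arguments $\lambda_{\ell}/2^{l+1}$ appearing in \eqref{eq:ft_initial} and the arguments $\lambda_{\ell}/2^{l}$ appearing in Theorem~\ref{thm:UFS}(iv), so that the telescoping invokes \eqref{thmeq:2scale:alpha:beta} the correct number of times; this is a re-indexing issue rather than a genuine mathematical obstacle.
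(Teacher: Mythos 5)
Your argument is correct and takes essentially the same route as the paper's proof: both diagonalize every framelet operator simultaneously with $\widetilde{\gL}=\mU\Lambda\mU^{\top}$, write each energy as a trace against a filtered spectrum, and collapse the resulting diagonal factor to the identity via the framelet partition of unity, yielding $\tr(\mX^{\top}\mU\Lambda\mU^{\top}\mX)=E(\mX)$. The only difference is that you spell out how \eqref{eqn:partition2} follows by telescoping \eqref{thmeq:2scale:alpha:beta} from the normalization \eqref{thmeq:nrm:alpha:beta} and flag the $2^{l}$ versus $2^{l+1}$ indexing, steps the paper leaves implicit by citing \eqref{thmeq:2scale:alpha:beta} directly.
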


\begin{proof}
Recall the facts that  $E(\mX) = \tr(\mX^\top\widetilde{\gL}\mX)$ and  $\widetilde{\gL} = \mU \Lambda \mU^{\top}$, we have   
\begin{equation*}  
    E(\boldsymbol{\gW}_{0,J}\mX) = \tr\left(\mX^{\top}\boldsymbol{\gW}_{0,J}^{\top}\widetilde{\gL}\boldsymbol{\gW}_{0,J}\mX\right)= \tr\left(\mX^{\top}\mU \widehat{\alpha}\left(\frac{\Lambda}{2}\right)^2 \Lambda \mU^{\top} \mX \right) ,
\end{equation*}
and $\forall l=1,\dots,J, $ 
\begin{equation}
    E(\boldsymbol{\gW}_{r,l}\mX) = \tr\left(\mX^{\top}\boldsymbol{\gW}_{0,J}^{\top}\widetilde{\gL}\boldsymbol{\gW}_{r,l}\mX\right)= \tr\left(\mX^{\top}\mU \widehat{\beta^{(r)}}\left(\frac{\Lambda}{2^{l+1}}\right)^2 \Lambda \mU^{\top} \mX \right).
    \label{eqn:energy}
\end{equation}
By the identity \eqref{thmeq:2scale:alpha:beta}, we also have 
\begin{equation}
    \widehat{\alpha}\left(\frac{\Lambda}{2}\right)^2 + \sum_{r=1}^K\sum_{l=1}^J \widehat{\beta^{(r)}}\left(\frac{\Lambda}{2^{l+1}}\right)^2 = \mI.
    \label{eqn:partition2}
\end{equation}
Therefore, combining \eqref{eqn:energy} and \eqref{eqn:partition2}, we obtain that
\begin{equation*}
\begin{aligned}
    & E(\boldsymbol{\gW}_{0,J}\mX)+\sum_{r=1}^{K}\sum_{l=1}^{J} E(\boldsymbol{\gW}_{r,l}\mX) \\
    =& \tr\left(\mX^{\top}\mU \widehat{\alpha}\left(\frac{\Lambda}{2}\right)^2 \Lambda \mU^{\top} \mX \right) +  \sum_{r=1}^{K}\sum_{l=1}^{J} \tr\left(\mX^{\top}\mU \widehat{\beta^{(r)}}\left(\frac{\Lambda}{2^{l+1}}\right)^2 \Lambda \mU^{\top} \mX \right) \\
    =& \tr\left(\mX^{\top}\mU \left( \widehat{\alpha}\left(\frac{\Lambda}{2} \right)^2 + \sum_{r=1}^{K}\sum_{l=1}^{J}\widehat{\beta^{(r)}}\left(\frac{\Lambda}{2^{l+1}}\right)^2 \right) \Lambda \mU^{\top} \mX \right) \\
    =& \tr\left(\mX^{\top}\mU\Lambda\mU^{\top}\mX \right) \\
    =& E(\mX),
\end{aligned}
\end{equation*}
thus completing the proof.
\end{proof}

For the approximated framelet transforms $\boldsymbol{\gW}^{\natural}_{r,l}$, there is an energy change on RHS of \eqref{eq:DE preserve} due to the truncation error. The relatively smoother node features account for a lower level energy $E(\boldsymbol{\gW}_{0,J}\mX)$. In this way, we provide more flexibility to maneuver energy evolution as the energy does not decay when we use FMP in \eqref{eq:fmp1} or \eqref{eq:fmp2}. For simplicity, we focus on the exact framelet transforms. Under some mild assumptions on $\Theta_r$, we have the following two estimations on the Dirichlet energy. 

\begin{lemma}
    \citep{zhou2014some}
    \label{trace_inequality}
    Let $\mA_{i}\in\mathbb{R}^{d\times d}$ be positive semi-definite $(i=1,2,\cdots,n)$. Then,
    \begin{equation*}
\tr\left(\mA_{1}\mA_{2}\cdots\mA_{n}\right)\leq\sum_{i=1}^d\mu_{i}\left(\mA_{1}\mA_{2}\cdots\mA_{n}\right)\leq \tr\left(\mA_{1}\right)\tr\left(\mA_{2}\right)\cdots\tr\left(\mA_{n}\right),
    \end{equation*}
    where $\mu(\mA)$ denotes the singular value of matrix $\mA$.
\end{lemma}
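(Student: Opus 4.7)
The plan is to split the displayed chain into its two inequalities and treat each with standard trace/singular value machinery; the positive semi-definiteness will only be invoked in the right-hand estimate. Both ingredients are classical, so the goal is just to assemble the correct combination.

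For the left inequality, I would set $\mB := \mA_1 \mA_2 \cdots \mA_n$ and write $\tr(\mB) = \sum_{i=1}^{d} \lambda_i(\mB)$, where the eigenvalues are counted with multiplicity. Weyl's majorization inequality gives $\sum_{i=1}^{k} |\lambda_{(i)}(\mB)| \le \sum_{i=1}^{k} \mu_{(i)}(\mB)$ for every $k$ when both sequences are sorted in decreasing order of modulus; taking $k=d$ and noting $\tr(\mB) \le \sum_i |\lambda_i(\mB)|$ yields the desired bound $\tr(\mB) \le \sum_i \mu_i(\mB)$. This step does not use the PSD hypothesis; it is a property of the single matrix $\mB$.

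For the right inequality, I would recognise $\sum_i \mu_i(\cdot)$ as the Schatten $1$-norm (nuclear/trace norm), written $\|\cdot\|_{*}$, and use the ``mixed'' Hölder-type bound $\|\mX\mY\|_{*} \le \|\mX\|_{\mathrm{op}}\|\mY\|_{*}$, where $\|\cdot\|_{\mathrm{op}}$ denotes the spectral norm. Iterating over the product $\mA_1\cdots\mA_n$ gives
\begin{equation*}
\|\mA_1 \mA_2 \cdots \mA_n\|_{*} \le \|\mA_1\|_{\mathrm{op}} \|\mA_2\|_{\mathrm{op}} \cdots \|\mA_{n-1}\|_{\mathrm{op}} \|\mA_n\|_{*}.
\end{equation*}
Now the PSD hypothesis enters: for each $\mA_i \succeq 0$ the singular values coincide with the nonnegative eigenvalues, so $\|\mA_i\|_{\mathrm{op}} = \lambda_{\max}(\mA_i) \le \sum_j \lambda_j(\mA_i) = \tr(\mA_i)$ and likewise $\|\mA_n\|_{*} = \tr(\mA_n)$. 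Substituting these into the iterated bound produces $\sum_i \mu_i(\mA_1\cdots\mA_n) \le \prod_i \tr(\mA_i)$.

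The delicate point is the choice of Hölder bound in the second step: the nuclear norm is \emph{not} submultiplicative in general, so one cannot simply write $\|\mX\mY\|_{*} \le \|\mX\|_{*}\|\mY\|_{*}$ and iterate. The correct reduction is to keep the spectral norm on all but one factor and then use the PSD hypothesis to collapse both the spectral norm and the nuclear norm to the trace. Once this template is fixed, the argument telescopes cleanly and no further structure is needed.
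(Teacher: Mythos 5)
Your argument is correct. The paper itself offers no proof of this lemma -- it is quoted verbatim from \citet{zhou2014some} -- so your write-up is, in effect, a self-contained replacement for the citation rather than an alternative to an in-paper argument. Both halves check out: the left inequality follows, as you say, from Weyl's majorant theorem, $\tr(\mB)=\sum_i\lambda_i(\mB)\le\sum_i|\lambda_i(\mB)|\le\sum_i\mu_i(\mB)$, and uses no positivity; the right inequality follows from the ideal property of the nuclear norm, $\|\mX\mY\|_{*}\le\|\mX\|_{\mathrm{op}}\|\mY\|_{*}$, iterated across the product, after which positive semi-definiteness lets you replace $\|\mA_i\|_{\mathrm{op}}=\lambda_{\max}(\mA_i)\le\tr(\mA_i)$ and $\|\mA_n\|_{*}=\tr(\mA_n)$. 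One small factual correction to your closing remark: the nuclear norm \emph{is} submultiplicative on square matrices, since $\|\mX\|_{\mathrm{op}}\le\|\mX\|_{*}$ gives $\|\mX\mY\|_{*}\le\|\mX\|_{*}\|\mY\|_{*}$; so the cruder iteration $\|\mA_1\cdots\mA_n\|_{*}\le\prod_i\|\mA_i\|_{*}=\prod_i\tr(\mA_i)$ would also finish the proof for PSD factors. Your version, which keeps the operator norm on all but one factor, is simply a sharper intermediate bound, and nothing in your chain of inequalities is affected.
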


The following theorem shows for the framelet message passing without activation function in the update, the Dirichlet energy of the feature at the $t$th layer $\mX^{(t)}$ is not less than that of the $(t-1)$th layer $\mX^{(t-1)}$. In fact, the $\mX^{(t)}$ and $\mX^{(t-1)}$ are equivalent up to some constant, and the constant with respect to the upper bound depends jointly on the level of the framelet decomposition and the learnable parameters' bound.
\begin{theorem}
    \label{Energy_Bound}
    Let the $\textsc{FMP}$ defined by
 \begin{equation} 
\begin{aligned}
    \mX^{(t)} &= \mX^{(t-1)}+\mZ^{(t)} \\
    \mZ^{(t)} &=\sum_{r=1}^K\sum_{l=1}^J (\boldsymbol{\gW}_{r,l}) \mX^{(t-1)}\Theta_{r} +(\boldsymbol{\gW}_{0,J})\mX^{(t-1)}\Theta_{0},
\end{aligned}
\end{equation}
where the parameter matrix $\Theta_{r}\in \mathbb{R}^{d\times d} \: (r=0,1,\cdots,K)$. Suppose $\Theta_{r}$ is positive semi-definite with $\tr\left(\Theta_{r}\right)\leq M$ for every $r$, then we can bound the Dirichlet energy of the graph node feature $\mX^{(t)}$ at layer $t$ by
\begin{equation}
    E\left(\mX^{(t-1))}\right)\leq E\left(\mX^{(t))}\right)\leq \left(M\sqrt{KJ+1}+1\right)^2 E\left(\mX^{(t-1)}\right).
\end{equation}
\end{theorem}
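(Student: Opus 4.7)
The strategy is to expand $E(\mX^{(t)}) = E(\mX^{(t-1)}+\mZ^{(t)})$ by bilinearity of the Dirichlet form and control each piece with three ingredients: the commutativity of every $\boldsymbol{\gW}_{r,l}$ with $\widetilde{\gL}$ (both are functions of the same normalized Laplacian and are simultaneously diagonalized by $\mU$), the trace inequality of Lemma~\ref{trace_inequality}, and the energy conservation identity \eqref{eq:DE preserve}. Using $E(\mA+\mB)=E(\mA)+2\tr(\mA^\top\widetilde{\gL}\mB)+E(\mB)$ together with the fact that $\sqrt{E(\cdot)}$ is a seminorm (the square root of a PSD quadratic form), one obtains both the direct identity
\[
E(\mX^{(t)}) = E(\mX^{(t-1)}) + 2\tr\bigl((\mX^{(t-1)})^\top\widetilde{\gL}\mZ^{(t)}\bigr) + E(\mZ^{(t)})
\]
and the triangle bound $\sqrt{E(\mX^{(t)})}\le\sqrt{E(\mX^{(t-1)})}+\sqrt{E(\mZ^{(t)})}$; the former drives the lower bound, the latter the upper bound.

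\textbf{Lower bound.} The self term $E(\mZ^{(t)})\ge 0$ is automatic. For the cross term I would observe that each $\boldsymbol{\gW}_{r,l}$ (including the low-pass $\boldsymbol{\gW}_{0,J}$) is a non-negative function of $\widetilde{\gL}$, so $\widetilde{\gL}\boldsymbol{\gW}_{r,l}$ is PSD and commutes with $\boldsymbol{\gW}_{r,l}$. By the cyclic trace each summand
\[
\tr\bigl((\mX^{(t-1)})^\top\widetilde{\gL}\boldsymbol{\gW}_{r,l}\mX^{(t-1)}\Theta_r\bigr) = \tr\bigl(\Theta_r\,(\mX^{(t-1)})^\top\widetilde{\gL}\boldsymbol{\gW}_{r,l}\mX^{(t-1)}\bigr)
\]
is non-negative because $\tr(PQ)\ge 0$ for PSD $P,Q$; summing over the $KJ+1$ indices gives $\tr((\mX^{(t-1)})^\top\widetilde{\gL}\mZ^{(t)})\ge 0$, hence $E(\mX^{(t)})\ge E(\mX^{(t-1)})$.

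\textbf{Upper bound.} Applying the seminorm triangle inequality across the $KJ+1$ summands of $\mZ^{(t)}$ and then Cauchy--Schwarz on the resulting sum yields
\[
\sqrt{E(\mZ^{(t)})} \le \sqrt{KJ+1}\sqrt{\textstyle\sum_{r,l} E(\boldsymbol{\gW}_{r,l}\mX^{(t-1)}\Theta_r)}.
\]
For each summand, the commutation and the cyclic trace give $E(\boldsymbol{\gW}_{r,l}\mX^{(t-1)}\Theta_r)=\tr\bigl(\Theta_r^2\,(\mX^{(t-1)})^\top\boldsymbol{\gW}_{r,l}^2\widetilde{\gL}\mX^{(t-1)}\bigr)$, and both factors are PSD. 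Lemma~\ref{trace_inequality} then bounds this by $\tr(\Theta_r^2)\,E(\boldsymbol{\gW}_{r,l}\mX^{(t-1)})$, with $\tr(\Theta_r^2)\le (\tr\Theta_r)^2\le M^2$ because $\Theta_r$ is PSD. Invoking the energy conservation $\sum_{r,l}E(\boldsymbol{\gW}_{r,l}\mX^{(t-1)}) = E(\mX^{(t-1)})$ of \eqref{eq:DE preserve} produces $\sqrt{E(\mZ^{(t)})}\le M\sqrt{KJ+1}\sqrt{E(\mX^{(t-1)})}$; substituting into the triangle bound and squaring yields the claim.

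\textbf{Main obstacle.} The delicate point is arranging the cyclic moves so that the factor multiplying $\Theta_r^2$ is \emph{exactly} the matrix whose trace equals $E(\boldsymbol{\gW}_{r,l}\mX^{(t-1)})$; a mishandled permutation would leave a $\widetilde{\gL}^{1/2}$ stranded inside and block Lemma~\ref{trace_inequality}. A secondary subtlety is that both halves of the argument tacitly assume $\widehat{\alpha}(\cdot)$ and $\widehat{\beta^{(r)}}(\cdot)$ are real and non-negative on the spectrum of $\widetilde{\gL}$, which holds for the examples \eqref{eqs:mask.numer.s4}--\eqref{eqs:scal.numer.s3} but is worth flagging as a standing hypothesis underlying the PSD claims.
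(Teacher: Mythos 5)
Your proposal is correct, and your upper bound takes a genuinely different route from the paper's. For the lower bound you do essentially what the paper does (the cross term $\tr\bigl((\mX^{(t-1)})^{\top}\widetilde{\gL}\mZ^{(t)}\bigr)\ge 0$ plus $E(\mZ^{(t)})\ge 0$), but you actually justify the nonnegativity of the cross term -- PSD-ness of $\widetilde{\gL}\boldsymbol{\gW}_{r,l}$ together with $\tr(PQ)\ge 0$ for PSD $P,Q$ -- whereas the paper only asserts it from the PSD-ness of $\Theta_r$; your flag that this requires $\widehat{\alpha},\widehat{\beta^{(r)}}\ge 0$ on the spectrum is apt and applies equally to the paper's argument (it holds for the Haar-type filters the paper adopts). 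For the upper bound, the paper bounds $E(\mZ^{(t)})$ and the cross term directly with Lemma~\ref{trace_inequality}, using the partition-of-unity identity \eqref{thmeq:2scale:alpha:beta} and the scalar Cauchy--Schwarz estimate $\bigl(\widehat{\alpha}+\sum_{r,l}\widehat{\beta^{(r)}}\bigr)^2\le KJ+1$, and then adds the three pieces of the expansion. You instead exploit that $\sqrt{E(\cdot)}=\|\widetilde{\gL}^{1/2}\,\cdot\,\|$ is a seminorm: triangle inequality plus Cauchy--Schwarz over the $KJ+1$ summands of $\mZ^{(t)}$, the per-summand bound $E(\boldsymbol{\gW}_{r,l}\mX^{(t-1)}\Theta_r)\le \tr(\Theta_r^2)\,E(\boldsymbol{\gW}_{r,l}\mX^{(t-1)})\le M^2 E(\boldsymbol{\gW}_{r,l}\mX^{(t-1)})$ via commutation, cyclicity and Lemma~\ref{trace_inequality}, and finally the energy-conservation identity \eqref{eq:DE preserve}, arriving at the same constant $(1+M\sqrt{KJ+1})^2$. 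What each buys: your route works only with squared spectral symbols ($\boldsymbol{\gW}_{r,l}^2\widetilde{\gL}$, $\Theta_r^2$), so the upper bound needs no sign assumption on the generators and it cleanly avoids the cross terms between different passes that the paper's direct bound on $E(\mZ^{(t)})$ implicitly has to absorb (and which, strictly, also need nonnegative symbols); the paper's route is more direct and makes visible in the spectral domain exactly where the factor $KJ+1$ originates. One small over-caution on your side: only your lower bound needs the nonnegativity hypothesis, not ``both halves'' -- your upper-bound chain is sign-free.
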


\begin{proof}
    By definition, we have
\begin{equation}
    \label{Dirichlet_Energy}
    \begin{aligned}
    E\left(\mX^{(t)}\right)&=\tr \left(\left(\mX^{(t)}\right)^{\top}\widetilde{\gL}\mX^{(t)} \right) \\
    &=\tr \left[\left(\mX^{(t-1)}+\mZ^{(t)}\right)^{\top}\widetilde{\gL}\left(\mX^{(t-1)}+\mZ^{(t)} \right)\right] \\
    &=E\left(\mX^{(t-1)}\right)+E\left(\mZ^{(t)}\right)+2\tr \left(\left(\mX^{(t-1)}\right)^{\top}\widetilde{\gL}\mZ^{(t)}\right).
    \end{aligned}
\end{equation}
By Lemma~\ref{trace_inequality}, we have
\begin{equation}
    \begin{aligned}
    &E\left(\mZ^{(t)}\right) \\
    &\leq M^2 \tr \left[ \left(\mX^{(t-1)}\right)^{\top}\mU \left(  \widehat{\alpha}\left(\frac{\Lambda}{2} \right)+\sum_{r=1}^K\sum_{l=1}^J \widehat{\beta^{r}}\left(\frac{\Lambda}{2^{l+1}}\right) \right)^{2}\Lambda\mU^{\top}\mX^{(t-1)} \right], \\
    &\tr \left(\left(\mX^{(t-1)}\right)^{\top}\widetilde{\gL}\mZ^{(t)}\right) \\
    &\leq M \tr \left[ \left(\mX^{(t-1)}\right)^{\top}\mU \left( \widehat{\alpha}\left(\frac{\Lambda}{2} \right)+\sum_{r=1}^K\sum_{l=1}^J \widehat{\beta^{r}}\left(\frac{\Lambda}{2^{l+1}}\right) \right)\Lambda\mU^{\top}\mX^{(t-1)} \right].
    \end{aligned}
\end{equation}
From the identity \eqref{thmeq:2scale:alpha:beta}, we know that for any eigenvalue $\lambda$ of the Laplacian $\widetilde{\gL}$,
\begin{equation}
\left|\widehat{\alpha}\left(\frac{\lambda}{2}\right)\right|^2 + \sum_{r=1}^K\sum_{l=1}^J \left|\widehat{\beta^{(r)}}\left(\frac{\lambda}{2^{l+1}}\right)\right|^2 = 1.
\end{equation}
Therefore, we have
\begin{equation}
\left(\widehat{\alpha}\left(\frac{\lambda}{2} \right)+\sum_{r=1}^K\sum_{l=1}^J \widehat{\beta^{r}}\left(\frac{\lambda}{2^{l+1}}\right)\right) ^{2} \leq KJ+1,
\end{equation}
which leads to
\begin{equation}
    \label{Dirichlet_Energy_Inequality}
    \begin{aligned}
    E\left(\mZ^{(t)}\right)\leq& M^2(KJ+1)E\left(\mX^{(t-1)}\right), \\
    \tr \left(\left(\mX^{(t-1)}\right)^{\top}\widetilde{\gL}\mZ^{(t)}\right)\leq& M\sqrt{KJ+1} \: E\left(\mX^{(t-1)}\right).
    \end{aligned}
\end{equation}
Combining \eqref{Dirichlet_Energy}, \eqref{Dirichlet_Energy_Inequality} and the facts that $E\left(\mZ^{(t)}\right)\geq 0$, $\tr \left(\left(\mX^{(t-1)}\right)^{\top}\widetilde{\gL}\mZ^{(t)}\right)\geq 0$ since $\Theta_r, \Theta_0$ are symmetric positive semi-definite matrices, then we have
\begin{equation*}
    E\left(\mX^{(t-1))}\right)\leq E\left(\mX^{(t))}\right)\leq \left(M\sqrt{KJ+1}+1\right)^2 E\left(\mX^{(t-1)}\right),
\end{equation*}
thus completing the proof.
\end{proof}

In Theorem \ref{Energy_Bound}, we only prove the case without nonlinear activation function $\sigma$.  For the \textsc{FMP}{\tiny ode} scheme in \ref{eq:fmp2}, we only apply a MLP layer on input features at the beginning. In this case, we prove the Dirichlet energy of the framelet message passing of any layer is non-decreasing and is equivalent to the initial time energy.
 \begin{theorem}
     Suppose the framelet message passing with ODE update scheme $\left(\textsc{FMP}_{\rm ode}\right)$ is defined by 
    \begin{equation}
\begin{aligned}
\label{eq:fmp_sim}
    &{\partial \mX(t)}/{\partial t} = \mZ(t), \\
    &\mZ(t) =  \sum_{r=1}^K\sum_{l=1}^J (\boldsymbol{\gW}_{r,l})\mX(t)\Theta_{r}   +(\boldsymbol{\gW}_{0,J}) \mX(t) \Theta_{0}, \\
    &\mX {(0)}={\rm MLP}(\mX) .
\end{aligned}
\end{equation}
Under the same assumption in Theorem~\ref{Energy_Bound}, the Dirichlet energy is bounded by
    \begin{equation*}
        E(\mX(0)) \leq E(\mX(t)) \leq e^{2M\sqrt{KJ+1}\: t} E(\mX(0)).
    \end{equation*}
 \end{theorem}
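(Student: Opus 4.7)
The plan is to convert the bound on the discrete iteration from Theorem~\ref{Energy_Bound} into a differential inequality for $E(\mX(t))$, and then close the argument with a standard Gr\"onwall-type integration. The input $\mX(0)=\mathrm{MLP}(\mX)$ plays no role beyond serving as the initial condition, so I may treat $\mX(0)$ as arbitrary.

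First I would differentiate the Dirichlet energy along the flow. Since $\widetilde{\gL}$ is symmetric and constant in $t$, and by the cyclic property of the trace,
\begin{equation*}
    \frac{d}{dt} E(\mX(t)) \;=\; \frac{d}{dt}\tr\!\bigl(\mX(t)^\top \widetilde{\gL}\mX(t)\bigr) \;=\; 2\,\tr\!\bigl(\mX(t)^\top \widetilde{\gL}\mZ(t)\bigr).
\end{equation*}
This is exactly the cross term that appeared in equation~\eqref{Dirichlet_Energy} in the proof of Theorem~\ref{Energy_Bound}.

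Next I would import the estimates from that proof verbatim. Under the assumption that each $\Theta_r$ is positive semi-definite with $\tr(\Theta_r)\le M$, Lemma~\ref{trace_inequality} together with the partition-of-unity identity \eqref{thmeq:2scale:alpha:beta} gives
\begin{equation*}
    0 \;\le\; \tr\!\bigl(\mX(t)^\top\widetilde{\gL}\mZ(t)\bigr) \;\le\; M\sqrt{KJ+1}\; E(\mX(t)).
\end{equation*}
Combining with the derivative above yields the two-sided differential inequality
\begin{equation*}
    0 \;\le\; \frac{d}{dt}E(\mX(t)) \;\le\; 2M\sqrt{KJ+1}\; E(\mX(t)).
\end{equation*}
The left inequality shows $E(\mX(t))$ is non-decreasing, so $E(\mX(0))\le E(\mX(t))$. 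The right inequality, by Gr\"onwall's lemma (equivalently, integrating $\frac{d}{dt}\log E(\mX(t))\le 2M\sqrt{KJ+1}$ from $0$ to $t$), yields $E(\mX(t))\le e^{2M\sqrt{KJ+1}\,t}\,E(\mX(0))$, completing the claim.

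The main obstacle is really just ensuring the non-negativity of $\tr(\mX^\top\widetilde{\gL}\mZ)$ and the upper bound both hold pointwise in $t$; both facts were already established inside the proof of Theorem~\ref{Energy_Bound}, where positive semi-definiteness of the $\Theta_r$ was essential to drop absolute values and guarantee the sign. A minor technical point worth verifying is that $E(\mX(t))$ is differentiable in $t$, which follows from $\mX(t)$ being $C^1$ as the solution of a linear ODE with constant coefficient operator $\mX\mapsto \sum_{r,l}\boldsymbol{\gW}_{r,l}\mX\Theta_r+\boldsymbol{\gW}_{0,J}\mX\Theta_0$; existence and uniqueness on $[0,T]$ are immediate, so Gr\"onwall applies on any finite interval.
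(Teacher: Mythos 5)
Your proposal is correct and follows essentially the same route as the paper: both compute $\frac{d}{dt}E(\mX(t)) = 2\tr\bigl(\mX(t)^\top\widetilde{\gL}\mZ(t)\bigr)$ (the paper does this via a finite-difference expansion, you via the cyclic trace property directly), then invoke the bounds \eqref{Dirichlet_Energy_Inequality} established in the proof of Theorem~\ref{Energy_Bound} to obtain $0 \le \frac{d}{dt}E \le 2M\sqrt{KJ+1}\,E$, and integrate (Gr\"onwall) to conclude. Your added remark on differentiability of $E(\mX(t))$ is a reasonable technical note but does not change the argument.
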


 \begin{proof}
     By definition, we have
     \begin{align*}
         \mX(t)=&\mX(t-\Delta t)+\Delta t\mZ(t-\Delta t), \\
         E\left(\mX(t)\right)=&E\left(\mX(t-\Delta t)\right)+2\Delta t\:\tr \left(\mX(t-\Delta t)^{\top}\widetilde{\gL}\mZ(t-\Delta t)\right) \\
         &+\Delta t^2 E\left(\mZ(t-\Delta t)\right), \\
         \frac{\mathrm{d}E(\mX(t))}{\mathrm{d}t}=&2\tr \left(\mX(t)^{\top}\widetilde{\gL}\mZ(t)\right).
     \end{align*}
     Then, by applying \eqref{Dirichlet_Energy_Inequality},
     \begin{equation*}
         0\leq \frac{\mathrm{d}E(\mX(t))}{\mathrm{d}t}\leq 2M\sqrt{KJ+1}\:E(\mX(t)),
     \end{equation*}
     which leads to
     \begin{equation*}
        E(\mX(0)) \leq E(\mX(t)) \leq e^{2M\sqrt{KJ+1}\: t} E(\mX(0)),
    \end{equation*}
    thus completing the proof.
\end{proof}

\section{Stability of Framelet Message Passing}
\label{sec:fmp_stability}
The stability of a GNN refers to its ability to maintain its performance when small changes are made to the input graph or to the model parameters. This section investigates how the multiscale property of graph framelets stabilizes the vanilla FMP in terms of fluctuation in the input node features.

\begin{lemma}\label{lem:framelet upper bound} For framelet transforms on graph $\mathcal{G}$ of level $l$ and $r$, and graph signal $X$ in $\ell_2(G)$,
\begin{equation*}                   \left\|\sum_{r=1}^K\boldsymbol{\gW}_{r,l}X\right\|\leq C \sqrt{\lambda_{\max}}\:2^{-\frac{l+1}{2}}\left\|X\right\|,
\end{equation*}
where $C$ is some constant and $\lambda_{\max}$ is the maximal eigenvalue of the graph Laplacian.
\end{lemma}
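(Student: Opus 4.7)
The plan is to diagonalize $\sum_{r=1}^K \boldsymbol{\gW}_{r,l}$ in the graph Laplacian eigenbasis and reduce the norm bound to a pointwise estimate on the framelet generators $\widehat{\beta^{(r)}}$ near the origin. Using the spectral definition in \eqref{eq:ft_initial},
\[
\sum_{r=1}^K \boldsymbol{\gW}_{r,l} X \;=\; \mU\, S_l(\Lambda)\, \mU^\top X, \qquad S_l(\Lambda)\;:=\;\sum_{r=1}^K \widehat{\beta^{(r)}}\!\left(\frac{\Lambda}{2^{l+1}}\right),
\]
where $S_l(\Lambda)$ is diagonal because all $\boldsymbol{\gW}_{r,l}$ share the same eigenbasis $\mU$. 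Since $\mU$ is orthogonal, the $2$-norm is preserved and
\[
\Bigl\|\sum_{r=1}^K \boldsymbol{\gW}_{r,l} X\Bigr\|^2 \;=\; \sum_{\ell=1}^N |S_l(\lambda_\ell)|^2\, |(\mU^\top X)_\ell|^2 \;\le\; \bigl(\max_\ell |S_l(\lambda_\ell)|\bigr)^2\,\|X\|^2,
\]
so it suffices to bound $|S_l(\lambda_\ell)|$ uniformly in $\ell$.

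The next step is a Cauchy–Schwarz bound over the $K$ high-pass channels,
\[
|S_l(\lambda)|^2 \;\le\; K \sum_{r=1}^K \Bigl|\widehat{\beta^{(r)}}\!\left(\tfrac{\lambda}{2^{l+1}}\right)\Bigr|^2,
\]
which reduces the problem to estimating $\sum_{r} |\widehat{\beta^{(r)}}(\xi)|^2$ at the small argument $\xi=\lambda_\ell/2^{l+1}$. Here I would invoke the normalization and two-scale identities \eqref{thmeq:nrm:alpha:beta}–\eqref{thmeq:2scale:alpha:beta} of Theorem~\ref{thm:UFS}, which yield $\sum_{r=1}^K |\widehat{\beta^{(r)}}(\xi)|^2 \le 1-|\widehat{\alpha}(\xi)|^2$ (together with $\widehat{\alpha}(0)=1$ coming from the refinement relation \eqref{eq:refinement} with $\widehat{a}(0)=1$). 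Combined with the regularity of $\widehat{\alpha}$ exhibited by both Haar-type \eqref{eqs:mask.numer.s4} and piecewise-smooth \eqref{eqs:mask.numer.s3} generators — explicitly, the Taylor expansion $|\widehat{\alpha}(\xi)|^2 = 1-c\xi^2+O(\xi^4)$ near the origin — this gives a clean pointwise decay
\[
\sum_{r=1}^K \bigl|\widehat{\beta^{(r)}}(\xi)\bigr|^2 \;\le\; \widetilde C\,|\xi|
\]
on any bounded interval containing the rescaled spectrum. Substituting $\xi = \lambda_\ell/2^{l+1}\le \lambda_{\max}/2^{l+1}$ and combining with Cauchy–Schwarz produces $|S_l(\lambda_\ell)| \le \sqrt{K\widetilde C}\,\sqrt{\lambda_{\max}}\,2^{-(l+1)/2}$, from which the stated inequality follows with $C:=\sqrt{K\widetilde C}$.

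\paragraph{Expected obstacle.}
The routine steps are the diagonalization and Cauchy–Schwarz; the substantive issue is justifying the decay $\sum_r |\widehat{\beta^{(r)}}(\xi)|^2 \le \widetilde C|\xi|$ in a filter-bank–agnostic way. The high-pass vanishing $\widehat{\beta^{(r)}}(0)=0$ alone only gives continuity, not a quantitative rate, so I will need to assume (or verify) mild regularity of $\widehat{\alpha}$ near $0$ — for the two concrete examples this is immediate, but for a general tight frame $\Psi$ the constant $\widetilde C$ (and hence $C$) genuinely depends on the chosen filter bank $\boldsymbol{\eta}$. I will state this dependence explicitly and verify it for the Haar case to make the constant transparent.
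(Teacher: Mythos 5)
Your proposal is correct and follows essentially the same route as the paper's proof: diagonalize in the Laplacian eigenbasis, invoke the tight-frame two-scale identity \eqref{thmeq:2scale:alpha:beta} to convert the high-pass sum into a statement about $\widehat{\alpha}$, and use the regularity of $\widehat{\alpha}$ near the origin (the paper via a bounded difference quotient of $|\widehat{\alpha}(\xi/2)|^2-|\widehat{\alpha}(\xi)|^2$, you via the Taylor bound on $1-|\widehat{\alpha}(\xi)|^2$) together with Parseval to extract the factor $\lambda_{\max}2^{-(l+1)}$. Your explicit Cauchy--Schwarz over the $K$ channels is in fact slightly more careful than the paper, which silently identifies the norm of the sum over $r$ with the sum of squared channel coefficients, and your remark that the constant depends on the chosen filter bank is consistent with the paper's unspecified $C$.
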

\begin{proof}
By the orthonormality of $\eigfm$,
\begin{equation*}
    \ipG{X,\cfra} = \sum_{\ell=1}^{\NV} \conj{\FT{\scala}\left(\frac{\eigvm}{2^{l}}\right)}\Fcoem{X}\:\eigfm(\uG), \quad
    \ipG{X,\cfrb{r}} = \sum_{\ell=1}^{\NV} \conj{\FT{\scalb^{(r)}}\left(\frac{\eigvm}{2^{l}}\right)}\Fcoem{X}\:\eigfm(\uG).
\end{equation*}

By \eqref{thmeq:2scale:alpha:beta}, 
\begin{align*}
     \sum_{\ell=1}^{\NV}\sum_{r=1}^{K} \left|\FT{\scalb^{(r)}}\left(\frac{\eigvm}{2^{l}}\right)\right|^2\left|\Fcoem{X}\right|^2\:\left\|\eigfm\right\|^2
    &=\sum_{\ell=1}^{\NV} \left[\FT{\scala}\left(\frac{\eigvm}{2^{l+1}}\right)^2-\FT{\scala}\left(\frac{\eigvm}{2^{l}}\right)^2\right]\bigl|\Fcoem{X}\bigr|^2\:\|\eigfm\|^2\\
    &=\frac{\lambda_\ell}{2^{l+1}}\sum_{\ell=1}^{\NV} \frac{\FT{\scala}\left(\frac{\eigvm}{2^{l+1}}\right)^2-\FT{\scala}\left(\frac{\eigvm}{2^{l}}\right)^2}{\frac{\lambda_\ell}{2^{l+1}}}\bigl|\Fcoem{X}\bigr|^2\\
    &\leq C^2 \lambda_{\max}2^{-(l+1)} \|X\|^2,
\end{align*}
where the inequality of the last line uses that the scaling function $\FT{\scala}$ is continuously differentiable on real axis, and Parseval's identity.

Then, by \eqref{eq:ft_initial},
\begin{equation*} 
\begin{aligned}
\left\|\sum_{r=1}^{K}\boldsymbol{\gW}_{r,l}X\right\|^2
&=\left\|\ipG{X,\cfrb[l,\cdot]{r}}\right\|^2 \\ 
&= \sum_{\ell=1}^{\NV}\sum_{r=1}^{K} \left|\FT{\scalb^{(r)}}\left(\frac{\eigvm}{2^{l}}\right)\right|^2\left|\Fcoem{X}\right|^2\:\left\|\eigfm\right\|^2 \\
&\leq C^2 \lambda_{\max}2^{-(l+1)} \|X\|^2,
\end{aligned}
\end{equation*}
thus completing the proof.
\end{proof}

\begin{theorem}[Stability]\label{thm:stability}  
The vanilla Framelet Message Passing (FMP) in \eqref{eq:fmp1} has bounded parameters: $\|\Theta_r\|\leq C_r$ with constants $C_r$ for $r=0,1,\dots,K$,
then the FMP is stable, \textit{i.e.}, there exists a constant $C$ such that
\begin{equation*}
    \left\|\mX^{(t)}-\widetilde{\mX}^{(t)}\right\|\leq C^t \left\|\mX^{(0)}-\widetilde{\mX}^{(0)}\right\|,
\end{equation*}
where $C:=1+4 C_1 \sqrt{\lambda_{\max}} \max_{r=0,1,\dots,K} C_r$, $\mX^{(0)}$ and $\widetilde{\mX}^{(0)}$ are the initial graph node features with $\mX^{(t+1)}$ and $\widetilde{\mX}^{(t+1)}$ at layer $t$.
\end{theorem}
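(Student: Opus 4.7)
The plan is to propagate the node-feature discrepancy $\Delta^{(t)}:=\mX^{(t)}-\widetilde{\mX}^{(t)}$ across one FMP layer, establish the single-step contraction $\|\Delta^{(t)}\|\le C\,\|\Delta^{(t-1)}\|$ with the stated constant, and then iterate in $t$. Subtracting the two copies of \eqref{eq:fmp_mat} driven by $\mX^{(t-1)}$ and $\widetilde{\mX}^{(t-1)}$ gives
\begin{equation*}
\Delta^{(t)}=\Delta^{(t-1)}+\sigma\!\left(S\mX^{(t-1)}\right)-\sigma\!\left(S\widetilde{\mX}^{(t-1)}\right),
\end{equation*}
where $S$ stands for the linear framelet aggregation inside the activation. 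Since $\sigma=\mathrm{ReLU}$ is $1$-Lipschitz, the triangle inequality yields
\begin{equation*}
\|\Delta^{(t)}\|\le \|\Delta^{(t-1)}\|+\Big\|\sum_{l=1}^{J}\sum_{r=1}^{K}\boldsymbol{\gW}^{\natural}_{r,l}\Delta^{(t-1)}\Theta_{r}+\boldsymbol{\gW}^{\natural}_{0,J}\Delta^{(t-1)}\Theta_{0}\Big\|,
\end{equation*}
so the task reduces to a linear-in-$\|\Delta^{(t-1)}\|$ bound on the framelet term.

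Next I would absorb the learnable matrices by sub-multiplicativity, $\|\boldsymbol{\gW}^{\natural}_{r,l}\Delta^{(t-1)}\Theta_{r}\|\le C_{r}\|\boldsymbol{\gW}^{\natural}_{r,l}\Delta^{(t-1)}\|$, and write $C_{\star}:=\max_{r=0,\dots,K}C_{r}$. Then I would feed each high-pass level into Lemma~\ref{lem:framelet upper bound}, extracting the crucial scale-wise decay $\big\|\sum_{r}\boldsymbol{\gW}^{\natural}_{r,l}\Delta^{(t-1)}\big\|\le C_{1}\sqrt{\lambda_{\max}}\,2^{-(l+1)/2}\|\Delta^{(t-1)}\|$. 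Summing over $l=1,\dots,J$ produces a $J$-independent geometric tail, $\sum_{l\ge 1}2^{-(l+1)/2}=(2-\sqrt{2})^{-1}<2$. The remaining low-pass term $\|\boldsymbol{\gW}^{\natural}_{0,J}\Delta^{(t-1)}\|$ is estimated by the same mechanism applied to $\widehat{\alpha}$ in place of $\widehat{\beta}^{(r)}$ (using either $|\widehat{\alpha}|\le 1$ pointwise or a refined $\sqrt{\lambda_{\max}}$-weighted estimate analogous to the lemma), contributing a matching constant. Collecting all pieces and reinstating $C_{\star}$ yields the single-step estimate $\|\Delta^{(t)}\|\le(1+4C_{1}\sqrt{\lambda_{\max}}\,C_{\star})\|\Delta^{(t-1)}\|$, after which a trivial induction on $t$ delivers $\|\Delta^{(t)}\|\le C^{t}\|\Delta^{(0)}\|$.

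The main technical obstacle is the coupling between the left multiplication by $\boldsymbol{\gW}^{\natural}_{r,l}$ and the right multiplication by $\Theta_{r}$ inside the $r$-sum: one cannot simply pull a parameter bound outside because $\sum_{r}A_{r}\Delta B_{r}$ is not a product of a single $A$ and $B$. I would resolve this by working in the Frobenius norm, which satisfies $\|AXB\|_{F}\le\|A\|_{\mathrm{op}}\|X\|_{F}\|B\|_{\mathrm{op}}$ and respects the triangle inequality across $r$, and then applying Cauchy--Schwarz to turn $\sum_{r}\|\boldsymbol{\gW}^{\natural}_{r,l}\Delta^{(t-1)}\|_{F}$ into the stacked coefficient norm already controlled by Lemma~\ref{lem:framelet upper bound}. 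A secondary subtlety is that the theorem invokes the Chebyshev-approximated operators $\boldsymbol{\gW}^{\natural}_{r,l}$ whereas Lemma~\ref{lem:framelet upper bound} is stated for the exact $\boldsymbol{\gW}_{r,l}$; the approximation error decays in the Chebyshev order $m$ and can be absorbed into the constant $C_{1}$ without altering the exponential-in-$t$ shape of the bound.
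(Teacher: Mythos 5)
Your proposal follows essentially the same route as the paper's proof: a one-step estimate obtained from the triangle inequality and the $1$-Lipschitz ReLU, with Lemma~\ref{lem:framelet upper bound} supplying the scale-wise decay $2^{-(l+1)/2}$ for the high-pass terms, a direct bound for the low-pass term, absorption of the parameter bounds $C_r$, and induction in $t$. Your extra care about the coupling in $\sum_{r}\boldsymbol{\gW}_{r,l}\Delta\Theta_{r}$ (via the Frobenius norm) and about using the exact transforms $\boldsymbol{\gW}_{r,l}$ versus the Chebyshev-approximated $\boldsymbol{\gW}^{\natural}_{r,l}$ tightens points the paper's proof passes over silently, but does not change the underlying argument.
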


\begin{proof}
Let
\begin{equation*}
    \mX^{(t)}=\mX^{(t-1)}+\mZ^{(t)},\quad
    \widetilde{\mX}^{(t)}=\widetilde{\mX}^{(t-1)}+\widetilde{\mZ}^{(t)},
\end{equation*}
with initialization $\mX^{(0)},\widetilde{\mX}^{(0)}\in \ell_2(\gG)$. It thus holds that
\begin{align*}
\left\|\mX^{(t)}-\widetilde{\mX}^{(t)}\right\|&\leq \left\|\mX^{(t-1)}-\widetilde{\mX}^{(t-1)}\right\|+\left\|\mZ^{(t)}-\widetilde{\mZ}^{(t)}\right\|, 
\end{align*}
where 
\begin{equation*}
\mZ^{(t)}=\sigma\left(\sum_{r=1}^K\sum_{l=1}^J
\boldsymbol{\gW}_{r,l}\mX^{(t-1)}\Theta_{r}+\boldsymbol{\gW}_{0,J}\mX^{(t-1)}\Theta_{0}\right)=:\sigma(Y^{(t)}).
\end{equation*}
By Lemma~\ref{lem:framelet upper bound},
\begin{align*}
&\left\|\mY^{(t)}-\widetilde{\mY}^{(t)}\right\| \\
\leq& \sum_{l=1}^J\left\|\sum_{r=1}^K\boldsymbol{\gW}_{r,l}\left(\mX^{(t-1)}-\widetilde{\mX}^{(t-1)}\right)\Theta_{r}\right\|
+\left\|\boldsymbol{\gW}_{0,J}\left(\mX^{(t-1)}-\widetilde{\mX}^{(t-1)}\right)\Theta_{0}\right\| \\
\leq& C_1 \left(\sum_{l=1}^J C_r \sqrt{\lambda_{\max}}2^{-\frac{l+1}{2}}\left\|\mX^{(t-1)}-\widetilde{\mX}^{(t-1)}\right\|+C_0\sqrt{\lambda_{\max}}\:\left\|\mX^{(t-1)}-\widetilde{\mX}^{(t-1)}\right\|\right)\\
\leq& 4 C_1 \sqrt{\lambda_{\max}} \max_{r=0,1,\dots,K} C_r \left\|\mX^{(t-1)}-\widetilde{\mX}^{(t-1)}\right\|.
\end{align*}
Therefore, 
\begin{align*}
\left\|\mZ^{(t)}-\widetilde{\mZ}^{(t)}\right\| &=\left\|\sigma\left(\mY^{(t)}-\widetilde{\mY}^{(t)}\right)\right\| \\
&\leq \left\|\mY^{(t)}-\widetilde{\mY}^{(t)}\right\| \\
&\leq 4 C_1 \sqrt{\lambda_{\max}} \max_{r=0,1,\dots,K} C_r \left\|\mX^{(t-1)}-\widetilde{\mX}^{(t-1)}\right\|, 
\end{align*}
which leads to 
\begin{equation*}
    \left\|\mX^{(t)}-\widetilde{\mX}^{(t)}\right\|\leq C\left\|\mX^{(t-1)}-\widetilde{\mX}^{(t-1)}\right\|\leq C^t \left\|\mX^{(0)}-\widetilde{\mX}^{(0)}\right\|,
\end{equation*}
thus completing the proof.
\end{proof}
\begin{remark}
    In the neural ODE case, we can prove  
    \begin{equation}
        \left\|\mX^{(t+1)}-\widetilde{\mX}^{(t+1)}\right\|\leq e^{Ct}\left\|\mX^{(0)}-\widetilde{\mX}^{(0)}\right\|,
    \end{equation}
    where $C$ is a constant. Given the number of layers, the ${\rm FMP}_{\rm ode}$ is stable. 
\end{remark}

\section{Numerical Analysis} 
\label{sec:exp}
This section validates the performance of FMP with node classification on a diverse of popular benchmark datasets, including 4 homogeneous graphs and 3 heterogeneous graphs. The models are programmed with \texttt{PyTorch-Geometric} (version 2.0.1) and \texttt{PyTorch} (version 1.7.0) and tested on NVIDIA\textsuperscript{\textregistered} Tesla V100 GPU with 5,120 CUDA cores and 16GB HBM2 mounted on an HPC cluster. 

\subsection{Experimental Protocol}
\paragraph{Datasets}
We evaluate the performance of our proposed FMP by the three most widely used citation networks \citep{yang2016revisiting}: \textbf{Cora}, \textbf{Citeseer}, and \textbf{Pubmed}. We also adopt three heterogeneous graphs (\textbf{Texas}, \textbf{Wisconsin}, and \textbf{Cornell}) from \textbf{WebKB} dataset \citep{garcia2016using} that record the web pages from computer science departments of different universities and their mutual links.

\paragraph{Baselines}
We compare the performance of FMP with various state-of-the-art baselines, which are implemented based on \texttt{PyTorch-Geometric} or their open
repositories. On both homogeneous and heterogeneous graphs, a set of powerful classic shallow GNN models are compared against, including MLP, GCN \citep{kipf2016semi}, GAT \citep{velivckovic2017graph}, and \textsc{GraphSage} \citep{hamilton2017inductive}. We also look into \textsc{JKNet} \citep{xu2018representation}, SGC \citep{wu2019simplifying}, and APPNP \citep{gasteiger2018predict} which are specially designed for alleviating the over-smoothing issue of graph representation learning. Such techniques usually introduce a regularizer or bias term to increase the depth of graph convolutional layers. In order to validate the effective design of the neural ODE scheme, the performance of three other continuous GNN layers, including GRAND \citep{chamberlain2021grand}, CGNN \citep{xhonneux2020continuous}, and GDE \citep{poli2019graph}, are investigated on homogeneous graph representation learning tasks. Furthermore, other specially designed convolution mechanisms, \textit{i.e.,} GCNII \citep{chen2020simple}, H{\tiny 2}GCN \citep{zhu2020beyond}, and ASGAT \citep{li2021beyond}, are compared for the learning tasks on the three heterogeneous graphs.


\begin{table}[t]
\caption{Hyperparameter searching space for node classification tasks.}
\label{tab:searchSpace}
\begin{center}
\resizebox{0.7\textwidth}{!}{
    \begin{tabular}{lll}
    \toprule
    \textbf{Hyperparameters} & \textbf{Search Space}  & \textbf{Distribution} \\
    \midrule
    learning rate & $[10^{-3},10^{-2}]$ & log-uniform \\
    weight decay & $[10^{-3},10^{-1}]$ & log-uniform \\
    dropout rate & $[0.0,0.8]$ & uniform \\
    hidden dim & $\{64,128,256\}$ & categorical \\
    layer & $[1,10]$ & uniform \\
    optimizer & \{\textsc{Adam}, \textsc{Adamax}\} & categorical \\
    \bottomrule
    \end{tabular}
}
\end{center}
\end{table}

\paragraph{Setup}
We construct FMP with two convolutional layers for learning node embeddings, the output of which is proceeded by a softmax activation for final prediction. The aggregator $\gamma$ is a $2$-layer MLP for \textsc{FMP}{\tiny mlp} in \eqref{eq:fmp1}, and a linear projection for \textsc{FMP}{\tiny ode} in \eqref{eq:fmp2}. Grid search is conducted to fine-tune the key hyperparameters from a considerably small range of search space, see Table~\ref{tab:searchSpace}. 
All the average test accuracy and the associated standard deviation come from $10$ runs.

\begin{table*}[t]
\caption{Average accuracy of node classification on homogeneous graphs over $10$ repetitions. The top three are highlighted by \textbf{\textcolor{red}{First}}, \textbf{\textcolor{violet}{Second}}, \textbf{Third}.}
\label{tab:node_classification_homo}
\begin{center}
\resizebox{0.85\textwidth}{!}{
    \begin{tabular}{lccc}
    \toprule
    & \textbf{Cora} & \textbf{Citeseer} & \textbf{Pubmed} \\ 
    homophily level & 0.83 & 0.71 & 0.79  \\
    \midrule
    MLP & $57.8${\scriptsize $\pm0.1$} & $61.2${\scriptsize $\pm0.1$} & $73.2${\scriptsize $\pm0.1$} \\
    GCN \citep{kipf2016semi} & $82.4${\scriptsize $\pm0.3$} & $70.7${\scriptsize $\pm0.4$} & $79.4${\scriptsize $\pm0.2$} \\
    GAT \citep{velivckovic2017graph} & $82.5${\scriptsize $\pm0.3$} & $72.1${\scriptsize $\pm0.4$} & $79.1${\scriptsize $\pm0.2$} \\
    \textsc{GraphSage} \citep{hamilton2017inductive} & $82.1${\scriptsize $\pm0.3$} & $71.8${\scriptsize $\pm0.4$} & $79.2${\scriptsize $\pm0.3$} \\
    \midrule
    SGC \citep{wu2019simplifying} & $81.9${\scriptsize $\pm0.2$} & \bm{$72.2${\scriptsize $\pm0.2$}} & $78.3${\scriptsize $\pm0.1$} \\
    \textsc{JKNet} \citep{xu2018representation} & $81.2${\scriptsize $\pm0.5$} & $70.7${\scriptsize $\pm0.9$} & $78.6${\scriptsize $\pm0.3$} \\
    APPNP \citep{gasteiger2018predict} & $83.3${\scriptsize $\pm0.4$} & $71.7${\scriptsize $\pm0.5$} & \textcolor{red}{\bm{$80.1${\scriptsize $\pm0.3$}}} \\
    \midrule
    GRAND \citep{chamberlain2021grand} & \bm{$83.6${\scriptsize $\pm0.5$}} & $70.8${\scriptsize $\pm1.1$} & \textcolor{violet}{\bm{$79.7${\scriptsize $\pm0.3$}}} \\
    CGNN \citep{xhonneux2020continuous} & 
    $81.7${\scriptsize $\pm0.7$} & $68.1${\scriptsize $\pm1.2$} & $80.2${\scriptsize $\pm0.3$} \\
    GDE \citep{poli2019graph} & \textcolor{red}{\bm{$83.8${\scriptsize $\pm0.5$}}} & \textcolor{violet}{\bm{$72.5${\scriptsize $\pm0.5$}}} & $79.5${\scriptsize $\pm0.3$} \\
    \midrule
    \textsc{FMP}{\tiny mlp} (ours) & $82.2${\scriptsize $\pm0.70$} & $71.7${\scriptsize $\pm0.40$} &$79.4${\scriptsize $\pm0.30$} \\
    \textsc{FMP}{\tiny ode} (ours) & \textcolor{violet}{\bm{$83.6${\scriptsize $\pm0.8$}}} & \textcolor{red}{\bm{$72.7${\scriptsize $\pm0.9$}}} & \bm{$79.6$}{\scriptsize $\pm0.9$} \\
    \bottomrule\\[-2.5mm]
    \end{tabular}}
\end{center}
\end{table*}

\subsection{Node Classification}
The proposed FMP is validated on six graphs to conduct node classification tasks. We call the three datasets that have relatively high \emph{homophily level} homogeneous graphs and the other three heterogeneous graphs. To be specific, we follow \cite{pei2019geom} and define the homophily levels of a graph by the overall degree of the consistency among neighboring nodes' labels, \textit{i.e.,}
\begin{equation*}
    \gH = \frac1{|V|}\sum_{v\in\gV}\frac{\text{\# $v$'s neighbors that have identical label as $v$}}{\text{\# $v$'s neighbors}}.
\end{equation*}

\vspace{2mm}
\paragraph{Homogeneous Graphs}
Table~\ref{tab:node_classification_homo} reports the performance of three node classification tasks on the three citation networks. The baseline models' prediction scores are retrieved from previous literature. In particular, results on the first seven (classic models and oversmoothing-surpassed models) are provided in \cite{zhu2021interpreting}, and the last three (continuous methods) are obtained from \cite{chamberlain2021grand}. \textsc{FMP}{\tiny ode} achieves top performance over its competitors on all three tasks and \textsc{FMP}{\tiny mlp} obtains a comparable prediction accuracy. Both \textsc{FMP} variants take multiple hops' information into account in a single graph convolution, while the majority of left methods require propagating multiple times to achieve the same level of the receptive field. However, during the aggregation procedure, information from long-range neighbors is dilated progressively, and the features from the closest neighbors (\textit{e.g.,} one-hop neighbors) are augmented recklessly. Consequently, \textsc{FMP}{\tiny ode} gains a stronger learning ability with respect to other models on the entire graph. Moreover, \textsc{FMP}{\tiny ode} utilizes a continuous update scheme and reaches deeper network architectures to pursue even more expressive propagation.

\begin{table*}[t]
\caption{Average accuracy of node classification on heterogeneous graphs over $10$ repetitions. \textbf{\textcolor{red}{First}}, \textbf{\textcolor{violet}{Second}}, \textbf{Third}.}
\label{tab:node_classification_hetero}
\begin{center}
\resizebox{0.85\textwidth}{!}{
    \begin{tabular}{lccc}
    \toprule
    & \textbf{Texas} & \textbf{Wisconsin} & \textbf{Cornell} \\ 
    homophily level & 0.11 & 0.21 & 0.30 \\
    \midrule
    MLP & $81.9${\scriptsize $\pm4.8$} & \bm{$85.3${\scriptsize $\pm3.6$}} & $81.1${\scriptsize $\pm6.4$} \\
    GCN \citep{kipf2016semi} & $59.5${\scriptsize $\pm5.3$} & $59.8${\scriptsize $\pm7.0$} & $57.0${\scriptsize $\pm4.8$} \\
    GAT \citep{velivckovic2017graph} & $59.5${\scriptsize $\pm5.3$} & $59.8${\scriptsize $\pm7.0$} & $58.9${\scriptsize $\pm3.3$} \\
    \textsc{GraphSage} \citep{hamilton2017inductive} & $82.4${\scriptsize $\pm6.1$} & $81.2${\scriptsize $\pm5.6$} & $76.0${\scriptsize $\pm5.0$} \\
    \midrule
    GCN-\textsc{JKNet} \citep{xu2018representation} & $66.5${\scriptsize $\pm6.6$} & $74.3${\scriptsize $\pm6.4$} & $64.6${\scriptsize $\pm8.7$} \\
    APPNP \citep{gasteiger2018predict} & $60.3${\scriptsize $\pm4.3$} & $48.4${\scriptsize $\pm6.1$} & $58.9${\scriptsize $\pm3.2$} \\
    \midrule
    GCNII \citep{chen2020simple} & $76.5$ & $77.8$ & $77.8$ \\
    H{\tiny 2}GCN \cite{zhu2020beyond} & \textcolor{violet}{\bm{$84.9${\scriptsize $\pm6.8$}}} & \textcolor{violet}{\bm{$86.7${\scriptsize $\pm4.9$}}} & \bm{$82.2${\scriptsize $\pm4.8$}} \\
    ASGAT \citep{li2021beyond} & \bm{$84.6${\scriptsize $\pm5.8$}} & $82.2${\scriptsize $\pm3.2$} & \textcolor{violet}{\bm{$86.9${\scriptsize $\pm4.2$}}} \\
    \midrule
    \textsc{FMP}{\tiny ode} (ours) & \textcolor{red}{\bm{$87.6$}{\scriptsize $\pm5.2$}} & \textcolor{red}{\bm{$93.7$}{\scriptsize $\pm1.2$}} & \textcolor{red}{\bm{$93.8$}{\scriptsize $\pm1.5$}} \\
    \bottomrule\\[-2.5mm]
    \end{tabular}}
\end{center}
\end{table*}

\paragraph{Heterogeneous Graphs}
As \textsc{FMP}{\tiny ode} can access multi-hop neighbors for a central node in one shot, we next explore its capability in distinguishing dissimilar neighboring nodes by prediction tasks on three heterogeneous datasets. The prediction accuracy for node classification tasks can be found in Table~\ref{tab:node_classification_hetero}. The results of the first six baseline methods (for general graphs) are acquired from \cite{zhu2020beyond}, and the scores on the last three baselines (that are specifically designed for heterogeneous graphs) are provided by the associated original papers. \textsc{FMP}{\tiny ode} outperforms the second best model noticeably by $3\%-8\%$. Furthermore, \textsc{FMP}{\tiny ode}'s performance is significantly more stable than other methods with smaller standard deviations on repetitive runs. For clarification, all the baselines' results are the same as ours that takes the average prediction accuracy over $10$ repetitions. The two exceptions are GCNII and ASGAT, which takes the results from $1$ and $3$ runs, respectively.



\subsection{Dirichlet Energy}
\begin{figure}[!tp]
    \centering
    \includegraphics[width=0.7\linewidth]{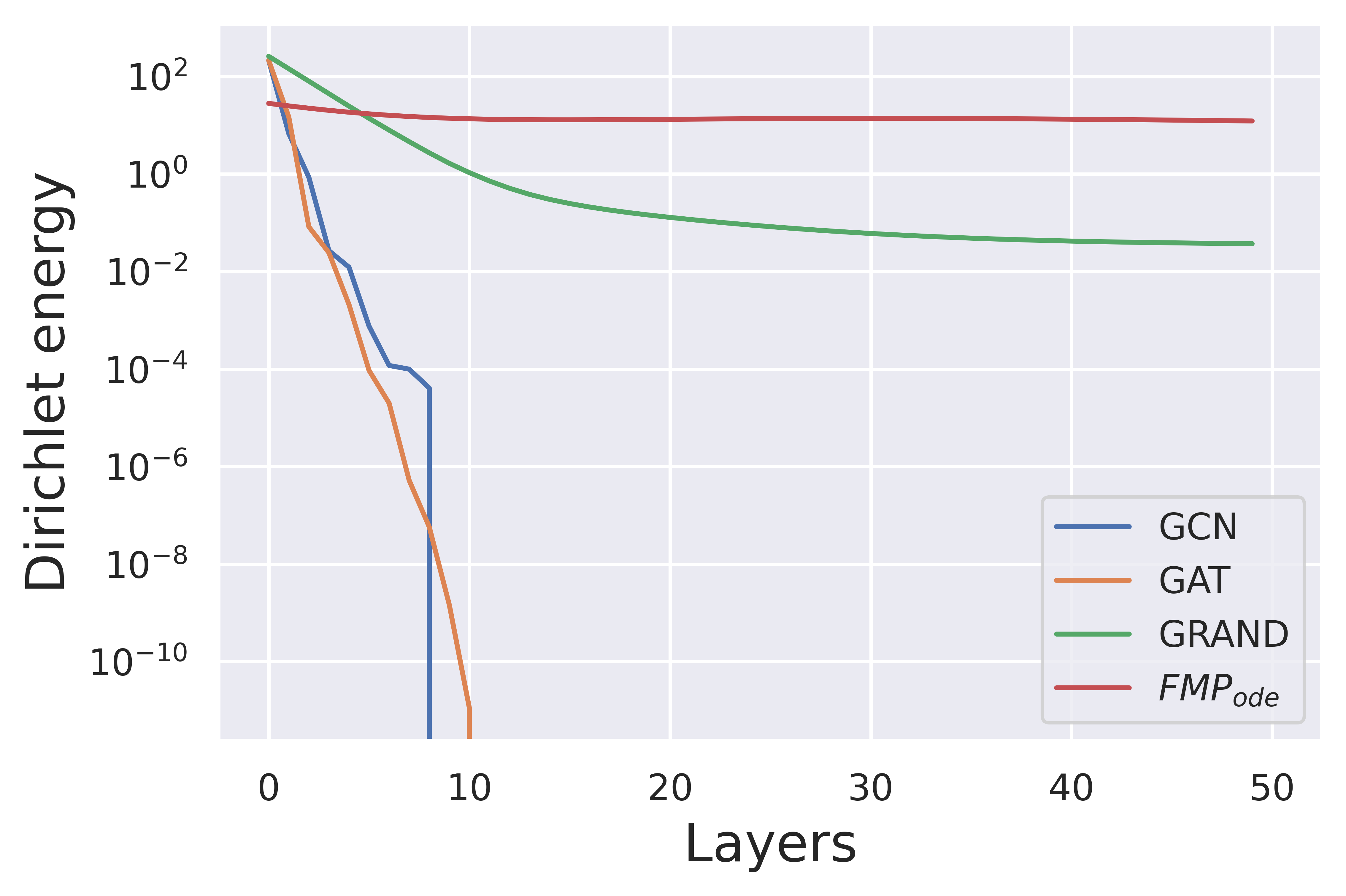}
    \caption{Energy evolution for $\textsc{FMP}{\tiny \text{ode}}$.}
    \label{fig:energy}
\end{figure}

We first illustrate the evolution of the Dirichlet energy of FMP by an undirected synthetic random graph. The synthetic graph has 100 nodes with two classes and 2D feature which is sampled from the normal distribution with the same standard deviation $\sigma = 2$ and two means $\mu_1 = -0.5,$ $\mu_2 = 0.5.$ The nodes are connected randomly with probability $p = 0.9$ if they are in the same class, otherwise nodes in different classes are connected with probability $p = 0.1.$ We compare the energy behavior of GNN models with four message passing propagators: GCNs \citep{kipf2016semi}, GAT \citep{velivckovic2017graph}, GRAND \citep{chamberlain2021grand} and $\textsc{FMP}{\tiny \text{ode}}$. We visualize how the node features evolve during $50$ layers of message passing process, from input features at layer $0$ to output features at layer $50$. For each model, the parameters are all properly initialized. The Dirichlet energy of each layer's output are plotted in logarithm scales. Traditional GNNs such as GCN and GAT suffer oversmoothing as the Dirichlet energy exponentially decays to zero within the first ten layers. GRAND relieves this problem by adding skip connections. $\textsc{FMP}{\tiny \text{ode}}$   increases energy mildly over network propagation. The oversmoothing issue in GNNs is circumvented with $\textsc{FMP}{\tiny \text{ode}}$. 


\section{Related Work}
\label{sec:review}
\subsection{Message Passing on Graph Neural Networks}
Message Passing Neural Network (MPNN) \citep{gilmer2017neural} establishes a general computational framework of graph feature propagation that covers the majority of update rules for attributed graphs. In each round, every node computes a message and passes the message to its adjacent nodes. Next, a random
node aggregates the messages it receives and uses the aggregation to update its embedding. Different graph convolutions vary in the choice of aggregation. For instance, GCN \citep{kipf2016semi} and \textsc{GraphSage} \cite{hamilton2017inductive} operate (selective) summation to neighborhood features, and some other work refines the aggregation weights by the attention mechanism \citep{xie2020mgat,brody2022how} or graph rewiring \citep{ruiz2020graphon,bruel2022rewiring,banerjee2022oversquashing,deac2022expander}. While constructing propagation rules 
from the adjacency matrix, \textit{i.e.}, spatial-based graph convolution, is effective enough to encode those relatively-simple graphs instances, it has been demonstrated that such methods ignore high-frequency local information in the input graph signals \citep{bo2021beyond}. With an increased number of layers, such convolutions only learn node degree and connected components under the influence of the Laplacian spectrum \citep{oono2019graph}, and the non-linear operation merely slows down the convolution speed \citep{wu2019simplifying}.


\subsection{Spectral Graph Transforms}
Spectral-based graph convolutions have shown promising performance in transferring a trained graph convolution between different graphs, \textit{i.e.}, the model is transferable and generalizable \citep{levie2019transferability,gama2020stability}. The output of spectral-based methods is stable with respect to perturbations of the input graphs \citep{ruiz2021graph,zhou2022robust,maskey2022generalization}. In literature, a diverse set of spectral transforms have been applied on graphs, such as Haar \citep{li2020fast,wang2020haar} where Haar convolution and Haar pooling were proposed using the hierarchical Haar bases on a chain of graphs, scattering \citep{gao2019geometric,ioannidis2020pruned}where the contractive graph scattering wavelets mimic the deep neural networks with wavelets are neurons and the decomposition is the propagation of the layers, needlets \citep{yi2022approximate} when the semidiscrete spherical wavelets are used to define approximate equivariance neural networks for sphere data, 
and framelets \citep{dong2017sparse,wang2019tight,zheng2022decimated} where the spectral graph convolutions are induced by graph framelets. 

\subsection{Oversmoothness in Graph Representation}
As many spatial-based graph convolutions merely perform a low-pass filter that smooths out the local perturbations in the input graph signal, smoothing has been identified as a common phenomenon in MPNNs \citep{li2018deeper,zhao2019pairnorm,nt2019revisiting}. Previous works usually measure and quantify the level of oversmoothing in a graph representation by the distances between node pairs \citep{rong2019dropedge,zhao2019pairnorm,chen2020measuring,Hou2020MeasuringAI}. When carrying out multiple times of signal smoothing operations, the output of nodes from different clusters tends to converge to similar vectors. 
While oversmoothing deteriorates the performance in GNNs, efforts have been made to preserve the identity of individual messages by modifying the message passing scheme, such as introducing jump connections \citep{xu2018representation,chen2020simple}, sampling neighboring nodes and edges \citep{rong2019dropedge,feng2020graph}, adding regularizations \citep{chen2020measuring,zhou2020towards,yang2021graph}, and increasing the complexity of convolutional layers \citep{balcilar2021breaking,geerts2021let,bodnar2021weisfeiler,wang2022acmp}.
Other methods try to trade-off graph smoothness with the fitness of the encoded features \citep{zhu2021interpreting,zhou2021admm,zhou2022robust} or postpone the occurrence of oversmoothing by mechanisms, such as residual networks \citep{li2021training,liu2021graph} and the diffusion scheme \citep{chamberlain2021grand,zhao2021adaptive}.

\subsection{Stability of Graph Convolutions}
While MPNN-based spatial graph convolutions have attracted increasing attentions due to their intuitive architecture and promising performance. However, when the generalization progress is required from a small graph to a larger one, summation-based MPNNs do not show satisfying stability and transferability \citep{yehudai2021local}. In fact, the associated generalization bound from one graph to another is directly proportional to the largest eigenvalue of the graph Laplacian \citep{verma2019stability}. Consequently, employing spectral graph filters that are robust to certain structural perturbations becomes a necessary condition for transferability in graph representation learning. 
Earlier work discussed the linear stability of spectral graph filters in the Cayley smoothness space with respect to the change in the normalized graph Laplacian matrix \citep{levie2019transferability,kenlay2020stability,gama2020stability}. The stability of graph-based models could be measured by the statistical properties of the model, where the graph topology and signal are viewed as random variables. It has been shown that the output of the spectral filters in stochastic time-evolving graphs behave the same as the deterministic filters in expectation \citep{isufi2017filtering}. \citet{Ceci2018RobustGS} approximated the original filtering process with uncertainties in the graph topology. Later on, stochastic analysis is leveraged to learn graph filters with topological stochasticity \citep{gao2021stochastic}. \citet{maskey2022generalization} proved the stability of the spatial-based message passing. \citet{kenlay2020stability,kenlay2021stability,kenlay2021interpretable} proved the stability for the spectral GCNs.

\section{Conclusion}
\label{sec:conclusion}
This work proposes an expressive framelet message passing for GNN propagation. The framelet coefficients of neighboring nodes provide a graph rewiring scheme to amalgamate features in the framelet domain. We show that our FMP circumvents oversmoothing that appears in most spatial GNN methods. The spectral information reserves extra expressivity to the graph representation by taking multiscale framelet representation into account. Moreover, FMP has good stability in learning node feature representations at low computational complexity.







\vskip 0.2in
\bibliography{reference.bib}

\end{document}